%% file: iclr2027_conference.tex
\documentclass{article} % For LaTeX2e
\usepackage{iclr2027_conference,times}
\iclrfinalcopy
\input{math_commands.tex}

\usepackage{hyperref}
\usepackage{url}

\usepackage{xcolor}

\usepackage{fontawesome5}
\definecolor{RefBlue}{HTML}{1F4E79}
\definecolor{CiteBlue}{HTML}{2A5DB0}
\definecolor{UrlCyan}{HTML}{007ACC}

\hypersetup{
    colorlinks=true,
    linkcolor=RefBlue,
    citecolor=CiteBlue,
    urlcolor=UrlCyan
}

\usepackage{algorithm}
\usepackage{algpseudocode}

\usepackage{booktabs}
\usepackage{amsfonts}
\usepackage{multirow}
\usepackage{graphicx}
\usepackage[table]{xcolor}
\usepackage{makecell}

\definecolor{posgreen}{RGB}{0,150,0}
\definecolor{negred}{RGB}{180,0,80}

\usepackage[table]{xcolor}

\definecolor{lightgraycell}{RGB}{242,242,242}
\definecolor{lightbluecell}{RGB}{235,245,255}

\definecolor{gold}{RGB}{212,175,55}
\definecolor{silver}{RGB}{160,160,160}
\definecolor{bronze}{RGB}{205,127,50}

\usepackage{newfloat}
\usepackage{listings}

\usepackage{booktabs}
\usepackage{amsmath} 

\usepackage{amsthm}
\usepackage{mathtools}
\usepackage{amssymb}

\usepackage{makecell}
\newtheorem{theorem}{Theorem}
\newtheorem{proposition}[theorem]{Proposition}
\newtheorem{lemma}[theorem]{Lemma}

\title{\textbf{\texttt{DreOPD}}: Degraded-Reference Extrapolative On-Policy Distillation for Flow-matching Models}

\author{
Mingfeng Lin\textsuperscript{1}\thanks{Equal contribution.}
\quad
Chengfei Cai\textsuperscript{2}\footnotemark[1]
\quad
Lin Xu\textsuperscript{1}
\quad
Yuxiang Wei\textsuperscript{3}
\quad
Liang Han\textsuperscript{1}\thanks{Corresponding author.}
\\
\textsuperscript{1}Harbin Institute of Technology (Shenzhen)
\\
\textsuperscript{2}Zhejiang University
\\
\textsuperscript{3}Harbin Institute of Technology
\\
\faGlobe\ \textbf{Project: }
\href{https://sleepy1231.github.io/DreOPD}
{\texttt{https://sleepy1231.github.io/DreOPD}}
}

\begin{document}

\maketitle

\vspace{-5pt}
\begin{figure}[!h]
\centering
\includegraphics[width=1.0\textwidth]{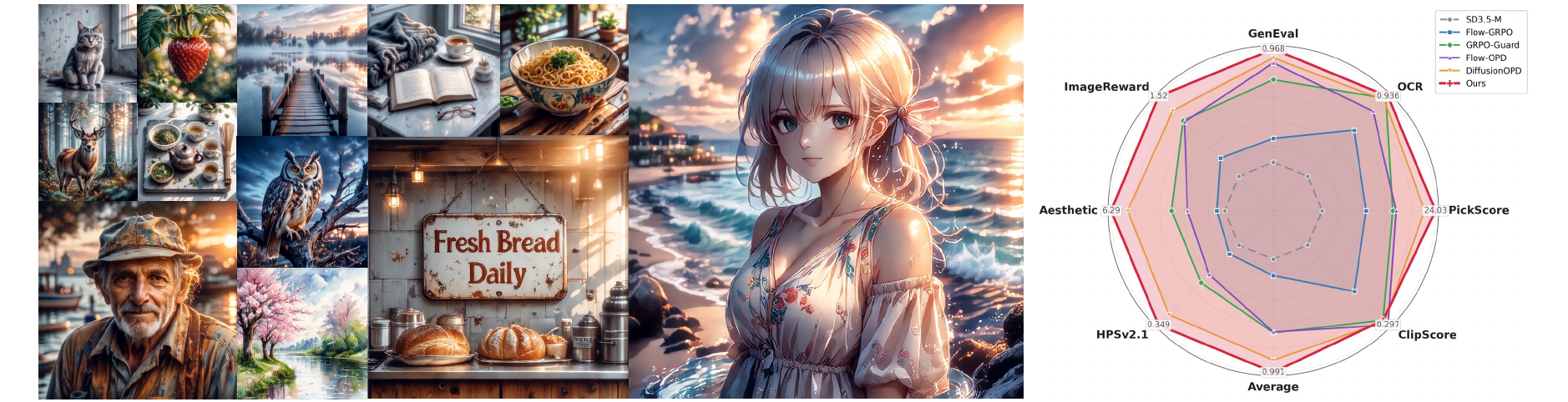}
\vspace{-5pt}
\caption{\textbf{Left:} Representative images generated by our method across diverse subjects, scenes, and text-rendering scenarios. \textbf{Right:} Our method achieves the highest average performance across the evaluated reward metrics.}
\label{fig:teaser}
\end{figure}

\begin{abstract}
Flow-matching models are now a mainstream method to image generation, but its adaptation to diverse downstream scenarios typically relies on post-training, which may cause conflicts among task-specific optimization objectives. Reinforcement learning enables direct optimization of task-specific rewards beyond the original models, yet trajectory-level optimization may incur high-variance gradients and cross-task interference. On-policy distillation (OPD) offers dense and stable supervision on student rollouts, but conventional teacher matching remains imitation-based. We propose \texttt{\textbf{DreOPD}}, a \textbf{D}egraded-\textbf{r}eference \textbf{e}xtrapolative \textbf{OPD} method for flow-matching models that bridges these two paradigms. Our \texttt{\textbf{DreOPD}} converts implicit reward extrapolation into closed-form velocity regression, enabling extrapolative post-training with the stability of OPD. It further uses a mildly degraded reference to strengthen the teacher-reference contrast, yielding a clearer extrapolation direction. Experiments on single- and multi-teacher settings show that \texttt{\textbf{DreOPD}} outperforms OPD and multi-task RL baselines in average performance, while surpassing specialized teachers on most metrics.
\end{abstract}

\section{Introduction}\label{sec:intro}

Flow-matching and diffusion models have emerged as leading paradigms for text-to-image generation~\citep{lipman2022flow,cao2025controllable}. Despite their strong general capabilities, practical deployment often requires post-training for diverse objectives, such as prompt following, text rendering, aesthetics, and human preference~\citep{li2026video,lyu2026flow}. These objectives may exhibit implicit conflicts, making it challenging for a single model to improve multiple capabilities simultaneously~\citep{yu2020gradient,hu2024harmodt}. Meanwhile, obtaining a strong task-specific teacher requires costly alignment, whereas weaker model variants are readily available through early training checkpoints or controlled degradation. Yet these weaker models are typically discarded rather than exploited as useful signals. This motivates a practical question: \emph{how can a student consolidate multiple strong teachers, leverage readily available weaker models, and improve beyond the teachers themselves?}

Existing approaches expose a fundamental trade-off. Reinforcement learning (RL)~\citep{black2024training,wallace2024diffusion,li2025mixgrpo,xue2025advantage,wang2026grpo,liu2026flow} directly optimize task rewards and can potentially improve beyond existing models. However, trajectory-level supervision may suffer from high variance, reward exploitation, and interference among heterogeneous objectives. On-policy distillation (OPD) instead provides dense and stable supervision by matching teachers on student-generated states~\citep{li2026diffusionopd,fang2026flowopd,zhou2026danceopd}. However, standard OPD is imitation-based: each teacher remains the pointwise target, causing a shared multi-task student to interpolate among specialized teachers rather than systematically exceed them. Thus, RL permits extrapolation but is difficult to optimize, whereas OPD is stable but teacher-bounded.

Reward extrapolation for autoregressive language models~\citep{yang2026gopd} offers a route between these two paradigms. It interprets the teacher-reference log-likelihood ratio as an implicit reward and amplifies this contrast beyond teacher matching. Directly transferring this principle to flow-matching models is nontrivial because such models predict continuous velocity fields rather than categorical token distributions~\citep{nie2026large}. A trajectory-level implementation would require likelihood-ratio estimation and credit assignment across continuous denoising steps~\citep{li2026diffusionopd}. The key challenge is therefore to translate distribution-level reward extrapolation into the native velocity-regression objective of flow matching.

We address this challenge with \texttt{\textbf{DreOPD}} (\textbf{D}egraded-\textbf{r}eference \textbf{e}xtrapolative \textbf{OPD}), a post-training method that converts implicit reward extrapolation into closed-form velocity regression for flow-matching models. Under shared-covariance Gaussian transitions~\citep{liu2026flow}, the conditional transition objective at each student-visited state has the pointwise optimizer $v_\lambda^\star=v_T+(\lambda-1)(v_T-v_{\mathrm{ref}})$. This target recovers teacher imitation at $\lambda=1$ and moves beyond the teacher when $\lambda>1$. It further reveals that the teacher-reference contrast determines the direction and magnitude of extrapolation. Motivated by this observation, we construct a mildly degraded reference that enlarges the contrast while preserving the generator's structure. As shown in Figure~\ref{fig:teaser}, \texttt{\textbf{DreOPD}} achieves the best average performance over prior methods and surpasses teachers on most metrics. Our contributions are summarized as follows:
\begin{itemize}
    \item We derive a closed-form target that extends flow-based OPD from teacher imitation to reward extrapolation.
    \item We characterize conditions under which the teacher-reference contrast is reward-aligned and introduce controlled reference degradation to strengthen this contrast.
    \item Across single- and multi-teacher settings, Our \texttt{\textbf{DreOPD}} achieves the best average performance while surpassing specialized teachers on most metrics.
\end{itemize}

\section{Related Work}
\subsection{Reinforcement Learning for Flow-matching Models}
Reinforcement learning has been increasingly applied to align diffusion and flow-based generative models with task-specific rewards. Early approaches optimize denoising policies through policy gradients or differentiate rewards through the sampling process, including DDPO~\citep{black2024training}, DPOK~\citep{fan2023dpok}, DRaFT~\citep{clark2024directly}, and AlignProp~\citep{prabhudesai2023aligning}. More recent methods, such as Flow-GRPO~\citep{liu2026flow} and DanceGRPO~\citep{xue2025dancegrpo}, adapt group-relative policy optimization~\citep{shao2024deepseekmath} to stochastic flow trajectories and assign terminal reward signals across continuous denoising steps. These methods are commonly instantiated to optimize a single model against one reward or a fixed aggregation of rewards. Extending them to multiple capabilities requires jointly optimizing heterogeneous objectives, whose gradients may interfere and induce trade-offs among tasks~\citep{li2025mixgrpo}. As a result, RL can be effective for specializing one capability, but multi-task RL remains difficult that the goal is to obtain a single model that performs well across diverse dimensions.

\subsection{On-Policy Distillation}
On-policy distillation~\citep{agarwal2024policy,li2026rethinking} trains a student on samples from its current policy, reducing the mismatch between the states encountered during training and generation. In language modeling, methods such as MiniLLM~\citep{gu2024minillm} and GKD~\citep{tan2023gkd} use student-generated sequences together with reverse-KL or generalized divergence objectives to transfer knowledge from a stronger teacher. Vision-OPD~\citep{yuan2026vision} extends OPD to multimodal LLMs, using a crop-conditioned teacher to supervise full-image student rollouts for fine-grained visual perception. G-OPD~\citep{yang2026gopd} interpret on-policy teacher matching as KL-regularized optimization under the implicit reward defined by the teacher-reference log-likelihood ratio, and generalize distillation by scaling this reward beyond exact teacher matching. For flow-matching models, Flow-OPD~\citep{fang2026flowopd} and DiffusionOPD~\citep{li2026diffusionopd} instead queries the teacher on states visited by student rollouts and regresses the student toward teacher velocity predictions. DanceOPD~\citep{zhou2026danceopd} further integrates text-to-image and image-to-image capabilities through OPD. Such regression provides dense local supervision, but standard teacher matching makes each teacher prediction the pointwise target~\citep{song2026survey}. In a multi-task setting, a shared student therefore tends to interpolate among task-specific teachers.

\section{Preliminaries}
\label{sec:preliminaries}

\paragraph{On-policy distillation and reward extrapolation.}

OPD trains a student distribution $\pi_\theta$ toward a teacher $\pi_T$ on samples generated by the student itself, typically by minimizing the reverse KL $D_\mathrm{KL}(\pi_\theta\|\pi_T)$. G-OPD~\citep{yang2026gopd} reinterprets teacher matching through the implicit reward $\log(\pi_T/\pi_{\mathrm{ref}})$ and introduces a scaling coefficient $\lambda$:
\begin{equation}
\mathbb{E}_{y\sim\pi_\theta}
\left[
\lambda
\log\frac{\pi_T(y)}{\pi_{\mathrm{ref}}(y)}
-
\log\frac{\pi_\theta(y)}{\pi_{\mathrm{ref}}(y)}
\right],
\label{eq:prelim-exopd}
\end{equation}
where $\pi_{\mathrm{ref}}$ is the reference. At $\lambda=1$, the reference terms cancel and the objective reduces to reverse-KL teacher matching. The regime $\lambda>1$, termed ExOPD, amplifies the teacher-reference contrast beyond teacher matching. This formulation establishes a distribution-level principle for reward extrapolation, but does not provide the local velocity target required by flow-matching training.

\section{Methodology}
\label{sec:method}
Reward extrapolation is naturally expressed for autoregressive language models~\citep{yang2026gopd}, which explicitly parameterize token distributions and provide token-level likelihood ratios. Flow-matching models~\citep{lipman2022flow}, however, parameterize velocity fields over continuous states, whereas the teacher-reference contrast is defined over complete trajectories. A direct trajectory-level implementation would therefore require likelihood-ratio estimation and long-horizon credit assignment over high-dimensional denoising transitions, making optimization potentially high-variance.

Under shared-covariance Gaussian transitions~\citep{song2020score}, we show that the conditional objective at each student-visited state admits the closed-form optimizer,
\begin{equation}
v_\lambda^\star
=
v_T + (\lambda-1)(v_T-v_{\mathrm{ref}}),
\label{eq:method-overview-target}
\end{equation}
where $v_T$, $v_{\mathrm{ref}}$, and $v_\lambda^\star$ denote the velocity fields of the teacher, the reference, and the extrapolated target. We train the student by regressing toward the target on its current rollouts. The teacher-reference contrast determines the direction and magnitude of extrapolation, motivating the degraded-reference construction introduced below. Figure~\ref{fig:concept} provides an overview.

\begin{figure}[h]
    \centering
    \includegraphics[width=1.0\textwidth]{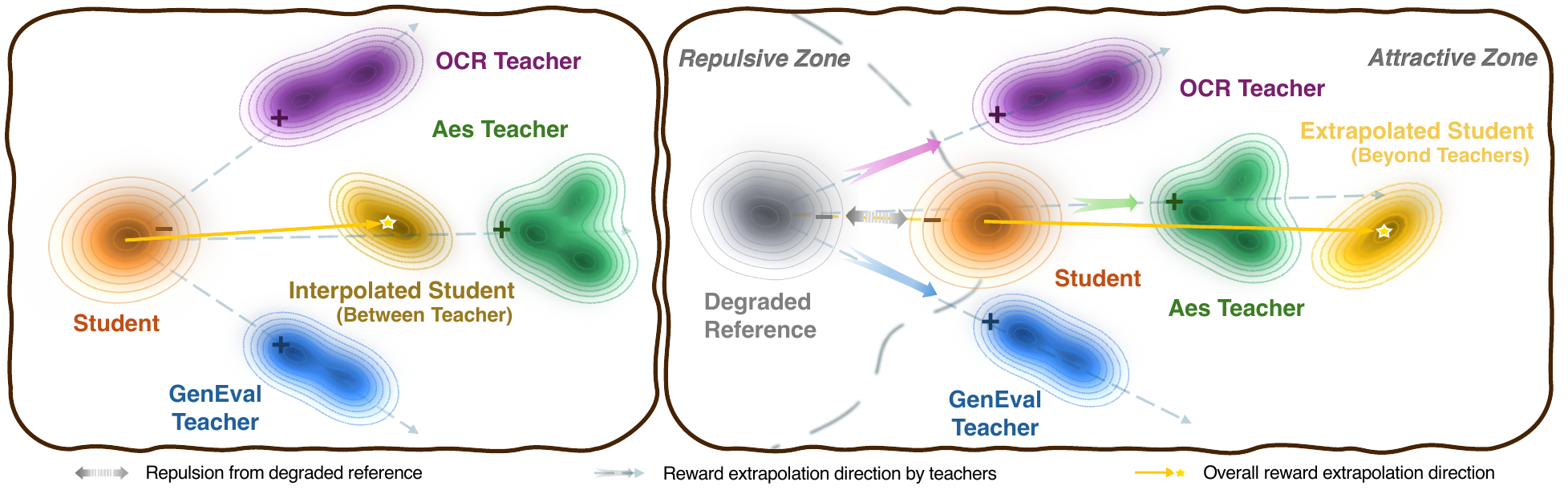}
    \caption{
    Conceptual comparison in the multi-task setting. \textbf{Left:} Standard OPD regresses a shared student toward task-specific teacher velocities, encouraging interpolation among the teachers. \textbf{Right:} \texttt{\textbf{DreOPD}} uses a shared degraded reference to construct targets that extrapolate through each task-specific teacher. Joint regression toward these targets moves the student beyond each teacher along the corresponding teacher-reference directions.
    }
    \label{fig:concept}
\end{figure}

\subsection{From Implicit Reward Extrapolation to Flow Velocity Targets}
\label{sec:closed-form-ire}

\paragraph{Trajectory-level design objective.}

Following the reverse-time sampling convention, we discretize the generation schedule as $
1=t_0>t_1>\cdots>t_N=0$ and $
\Delta t_j=t_{j+1}-t_j<0$. A generated trajectory is $
\tau =(x_{t_0},x_{t_1},\ldots,x_{t_N})$,
where $x_{t_0}$ follows the noise prior and $x_{t_N}$ is the generated sample. We distinguish the trajectory distribution $\Pi_\theta$ from its transition kernels $\pi_\theta^{(j)}$:
\begin{equation}
\Pi_\theta(\tau\mid c)
=
p(x_{t_0})
\prod_{j=0}^{N-1}
\pi_\theta^{(j)}
\left(
x_{t_{j+1}}\mid x_{t_j},c
\right),
\label{eq:trajectory-factorization}
\end{equation}
where $c$ denotes the conditioning input. We instantiate the distribution-level principle in Eq.~(\ref{eq:prelim-exopd}) over stochastic flow trajectories, obtaining the design objective
\begin{equation}
\begin{aligned}
\mathcal{J}(\theta)
={}&
\lambda\,
\mathbb{E}_{\tau\sim\Pi_\theta(\cdot\mid c)}
\left[
\log
\frac{\Pi_T(\tau\mid c)}
     {\Pi_{\mathrm{ref}}(\tau\mid c)}
\right]-
\mathrm{KL}
\left(
\Pi_\theta(\cdot\mid c)
\,\middle\|\,
\Pi_{\mathrm{ref}}(\cdot\mid c)
\right),
\end{aligned}
\label{eq:trajectory-objective}
\end{equation}
where $\lambda\geq 1$ controls the extrapolation strength. Equivalently,
\begin{equation}
\mathcal{J}(\theta)
=
\mathbb{E}_{\tau\sim\Pi_\theta}
\left[
\lambda
\log
\frac{\Pi_T(\tau\mid c)}
     {\Pi_{\mathrm{ref}}(\tau\mid c)}
-
\log
\frac{\Pi_\theta(\tau\mid c)}
     {\Pi_{\mathrm{ref}}(\tau\mid c)}
\right].
\label{eq:trajectory-objective-expanded}
\end{equation}

Directly optimizing Eq.~(\ref{eq:trajectory-objective-expanded}) over flow trajectories would require trajectory-level likelihood-ratio estimation and credit assignment across all denoising steps, which can introduce high-variance gradient~\citep{li2026diffusionopd}. Rather than estimating this trajectory-level policy gradient, we use the shared-covariance Gaussian transition structure to solve the conditional transition objective analytically, obtaining a closed-form velocity target at each student-visited state.

\paragraph{Gaussian flow transitions.}

Following Flow-GRPO~\citep{liu2026flow}, the Euler-Maruyama stochastic sampler uses Gaussian reverse-time transitions:
\begin{equation}
\pi_\theta^{(j)}
\left(
x_{t_{j+1}}\mid x_{t_j},c
\right)
=
\mathcal{N}
\left(
x_{t_{j+1}};
\mu_\theta^{(j)},
\Sigma_{t_j}
\right),
\label{eq:gaussian-transition}
\end{equation}
with $\mu_\theta=(1+\frac{\sigma_{t_j}^2\Delta t_j}{2t_j})x_{t_j}+(1+\frac{\sigma_{t_j}^2(1-t_j)}{2t_j})v_\theta \Delta t_j$ and $\Sigma_{t_j}=\sigma_{t_j}^2|\Delta t_j|\,I$. The transition kernels share $\Sigma_{t_j}$ and differ only in the means induced by their velocity fields.

Let $\mu_\theta^{(j)}$, $\mu_T^{(j)}$, and $\mu_{\mathrm{ref}}^{(j)}$ denote their transition means at a fixed visited state $(x_{t_j},t_j,c)$. For Gaussian distributions with shared covariance,
\begin{equation}
\begin{aligned}
\mathbb{E}_{\pi_\theta^{(j)}}
\left[
\log
\frac{\pi_T^{(j)}}
     {\pi_{\mathrm{ref}}^{(j)}}
\right]
={}&
\frac{
\left\|
\mu_\theta^{(j)}
-
\mu_{\mathrm{ref}}^{(j)}
\right\|^2
-
\left\|
\mu_\theta^{(j)}
-
\mu_T^{(j)}
\right\|^2
}
{2\sigma_{t_j}^2(-\Delta t_j)},
\\
\mathrm{KL}
\left(
\pi_\theta^{(j)}
\,\middle\|\,
\pi_{\mathrm{ref}}^{(j)}
\right)
={}&
\frac{
\left\|
\mu_\theta^{(j)}
-
\mu_{\mathrm{ref}}^{(j)}
\right\|^2
}
{2\sigma_{t_j}^2(-\Delta t_j)}.
\end{aligned}
\label{eq:gaussian-ratio-kl}
\end{equation}

The common prior $p(x_{t_0})$ cancels from all trajectory likelihood ratios. Using the chain factorization in Eq.~(\ref{eq:trajectory-factorization}), the trajectory objective can be written as a sum of conditional transition terms evaluated under student-induced state marginals. For a fixed visited state, maximizing the corresponding objective is equivalent to minimizing
\begin{equation}
\ell_j(\mu_\theta)
=\frac{
\lambda
\left\|
\mu_\theta^{(j)}-\mu_T^{(j)}
\right\|^2
-
(\lambda-1)
\left\|
\mu_\theta^{(j)}-\mu_{\mathrm{ref}}^{(j)}
\right\|^2
}{
2\sigma_{t_j}^2(-\Delta t_j)
}.
\label{eq:conditional-mean-loss}
\end{equation}

Since
$\mu_a^{(j)}-\mu_b^{(j)}
=
\Delta t_j
\left(
v_a-v_b
\right),$ the same conditional objective can be expressed in velocity space with a weight $\kappa_{t_j}$:
\begin{equation}
\ell_j(v_\theta)
=
\kappa_{t_j}[
\lambda
\left\|
v_\theta-v_T
\right\|^2
-
(\lambda-1)
\left\|
v_\theta-v_{\mathrm{ref}}
\right\|^2],\quad \kappa_{t_j}={-\Delta t_j}/{2\sigma_{t_j}^2}>0.
\label{eq:conditional-velocity-loss}
\end{equation}

Although Eq.~(\ref{eq:conditional-velocity-loss}) contains a negative quadratic term, its net coefficient on $\|v_\theta\|^2$ is $\lambda-(\lambda-1)=1.$ Therefore, the objective remains strongly convex in $v_\theta$ and admits a unique finite minimizer. Completing the square gives
\begin{equation}
\ell_j(v_\theta)
=\kappa_{t_j} 
\left\|
v_\theta-v_\lambda^\star
\right\|^2
+
C_j,
\label{eq:complete-square}
\end{equation}
where $C_j$ does not depend on $v_\theta$ and
\begin{equation}
v_\lambda^\star=
v_T
+
(\lambda-1)
\left(
v_T-v_{\mathrm{ref}}
\right).
\label{eq:extrapolated-velocity-target}
\end{equation}

Equation~(\ref{eq:extrapolated-velocity-target}) is the central target of \textbf{\texttt{DreOPD}}. At $\lambda=1$, it reduces to $v_\lambda^\star=v_T$, recovering teacher imitation. For $\lambda>1$, the target moves beyond the teacher along the teacher-reference direction. The displacement from the teacher is
\begin{equation}
v_\lambda^\star-v_T
=
(\lambda-1)
\left(
v_T-v_{\mathrm{ref}}
\right),
\label{eq:teacher-displacement}
\end{equation}
which makes the influence of both the extrapolation strength and the reference explicit.

\paragraph{On-policy regression objective.}

Given the extrapolated velocity target $v_\lambda^\star$, we optimize the target by on-policy regression objective:
\begin{equation}
\begin{aligned}
\mathcal{L}
(\theta)
=
\mathbb E_{c} \Bigg[
\sum_{j=0}^{N-1}
\kappa_{t_j}
\left(
\left\|
v_\theta(x_{t_j},t_j,c)
-
\mathrm{sg}(v_\lambda^\star(x_{t_j},t_j,c))
\right\|^2
\right)\Bigg],
\end{aligned}
\label{eq:on-policy-regression}
\end{equation}
where $\mathrm{sg}(\cdot)$ denotes stop-gradient. The trajectory $\{x_{t_j}\}_{j=0}^{N}$ is generated by the current student rollouts. Furthermore, under the deterministic ODE, the student, teacher and reference induce deterministic velocity predictions. We therefore optimize the same closed-form target using direct $\ell_2$ regression, weighted by ${(\Delta t_j)}^2/2$. Appendix~\ref{app:sde-to-ode} details the connection.

\subsection{Reward Interpretation of the Extrapolated Target}
\label{sec:reward-interpretation}

% The preceding derivation establishes a geometric result: the proposed target extrapolates from the reference through the teacher. Geometry alone does not imply improvement in a task reward. We next characterize when the teacher-reference direction admits a reward interpretation.

% The analysis is stated at the terminal-distribution level and characterizes when the teacher-reference density contrast is reward-aligned. It provides a reward interpretation for the contrast amplified by Eq.~(\ref{eq:extrapolated-velocity-target}).
The preceding derivation establishes a geometric result: the proposed target extrapolates from the reference through the teacher. This geometric property alone does not guarantee improvement under an arbitrary task reward. We therefore characterize a sufficient condition under which the teacher-reference contrast is reward-aligned. The analysis is stated at the terminal-distribution level and provides a reward interpretation for the contrast locally amplified by Eq.~(\ref{eq:extrapolated-velocity-target}).

% \paragraph{Teacher as a reward-tilted reference.}

\begin{lemma}[Teacher as a reward-tilted reference]
\label{lem:teacher-reward-tilt}
Let $r(x)$ be a bounded reward and $\beta>0$ a temperature. Suppose the teacher solves the KL-regularized reward optimization problem $p_T=\arg\max_p \left\{\mathbb{E}_{x\sim p}[r(x)]-\beta\,\mathrm{KL}\left(p\,\middle\|\,p_{\mathrm{ref}}\right)\right\}$. Then $ p_T(x) = ({1}/{Z_T})\ p_{\mathrm{ref}}(x)
\exp
\left({r(x)}/{\beta}
\right),
$ where $ Z_T = \int
p_{\mathrm{ref}}(x)
\exp \left({r(x)}/{\beta}\right)
dx$ is a normalizing constant.
\end{lemma}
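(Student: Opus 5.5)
The plan is the standard Gibbs variational argument: rewrite the objective as a single KL divergence to the candidate tilted distribution, then use nonnegativity of KL to identify the unique maximizer.

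\textbf{Step 1 (the tilt is well defined).} Since $r$ is bounded, say $|r|\le R$, we have
\[
e^{-R/\beta}\le Z_T\le e^{R/\beta}.
\]
Hence $Z_T\in(0,\infty)$, and
\[
q(x) := p_{\mathrm{ref}}(x)\exp\left(r(x)/\beta\right)/Z_T
\]
is a valid probability density. Moreover $q$ is mutually absolutely continuous with $p_{\mathrm{ref}}$, with bounded density ratio $\log(q/p_{\mathrm{ref}})=r/\beta-\log Z_T$.

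\textbf{Step 2 (restrict the domain).} Only distributions $p$ with $\mathrm{KL}(p\|p_{\mathrm{ref}})<\infty$ are relevant. Any other $p$ gives objective value $-\infty$, because $\mathbb{E}_p[r]\le R$ is finite. Such $p$ are absolutely continuous with respect to $p_{\mathrm{ref}}$, and therefore also with respect to $q$.

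\textbf{Step 3 (key identity).} For such $p$, substitute $\log p_{\mathrm{ref}} = \log q - r/\beta + \log Z_T$ inside the KL term:
\[
\mathbb{E}_{p}[r]-\beta\,\mathrm{KL}(p\|p_{\mathrm{ref}})
=\mathbb{E}_p[r]-\beta\,\mathbb{E}_p\left[\log\frac{p}{q}\right]-\mathbb{E}_p[r]+\beta\log Z_T
=\beta\log Z_T-\beta\,\mathrm{KL}(p\|q).
\]
The reward terms cancel exactly. Every expectation here is finite, since $r$ is bounded and $\mathrm{KL}(p\|p_{\mathrm{ref}})<\infty$, so the manipulation is legitimate and avoids any $\infty-\infty$ issue.

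\textbf{Step 4 (conclude).} The term $\beta\log Z_T$ does not depend on $p$. By Gibbs' inequality, $\mathrm{KL}(p\|q)\ge 0$, with equality iff $p=q$ almost everywhere. Therefore the objective is uniquely maximized at $p=q$, which gives $p_T=q$ as claimed. As a byproduct, the optimal value equals $\beta\log Z_T$.

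\textbf{Main obstacle.} The only delicate point is measure-theoretic: making sure the objective is well defined and that the rearrangement in Step 3 is valid. This is handled by boundedness of $r$ together with the restriction to finite-KL distributions in Step 2. If needed, an alternative route is a Lagrangian/first-variation argument with the normalization constraint, which yields $\log p = \log p_{\mathrm{ref}} + r/\beta - \log Z_T$. I would still use the KL-rewriting argument, because it proves global optimality and uniqueness directly, without a separate second-order check.
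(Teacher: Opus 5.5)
Your proposal is correct and follows essentially the same route as the paper: define the tilted density $q$, use boundedness of $r$ to get $0<Z_T<\infty$, rewrite the objective as $\beta\log Z_T-\beta\,\mathrm{KL}(p\,\|\,q)$, and conclude uniqueness from Gibbs' inequality. Your Step 2, which discards $p$ with infinite KL, handles the domain slightly more carefully than the paper, which simply restricts to $p\ll p_{\mathrm{ref}}$; the argument is otherwise the same.
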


Lemma~\ref{lem:teacher-reward-tilt} implies $\log[p_T(x)/p_{\mathrm{ref}}(x)]={r(x)}/{\beta}-\log Z_T.$ Under this assumption, the teacher-reference log-density ratio recovers the reward $r(x)$ up to a positive scale and an additive constant.

% Motivated by the geometric form of
% Eq.~(\ref{eq:extrapolated-velocity-target}), consider the distribution-level extrapolation
% \begin{equation}
%  
% p_\lambda(x)
% =
% \frac{1}{Z_\lambda}
% p_T(x)^\lambda
% p_{\mathrm{ref}}(x)^{1-\lambda},
% \qquad
% \lambda\geq 1.
% \label{eq:distribution-extrapolation}
% \end{equation}

% Its log-density satisfies $\log p_\lambda(x)=\log p_T(x)+(\lambda-1)\left[\log p_T(x)-\log p_{\mathrm{ref}}(x)\right]-\log Z_\lambda.$
% Thus, $p_\lambda$ amplifies the same teacher-reference log-density direction that appears in Eq.~(\ref{eq:ratio-recovers-reward}).

% The terminal-distribution counterpart of the objective also admits a closed-form optimizer, clarifying how $\lambda$ modifies the teacher distribution.

\begin{proposition}[Distributional optimum and reward monotonicity]
\label{prop:reward-monotonicity}
Consider the distribution-level objective
\begin{equation}
\mathcal{J}_\lambda(p)
=
\lambda\,
\mathbb{E}_{x\sim p}
\left[
\log\frac{p_T(x)}{p_{\mathrm{ref}}(x)}
\right]
-
\mathrm{KL}\left(p\,\middle\|\,p_{\mathrm{ref}}\right),
\end{equation}
where $\lambda\geq1$. Assuming the normalizing constant is finite, its
optimizer over distributions absolutely continuous with respect to
$p_{\mathrm{ref}}$ is
\begin{equation}
p_\lambda(x)
=
\frac{1}{Z_\lambda}
p_T(x)^\lambda
p_{\mathrm{ref}}(x)^{1-\lambda}.
\label{eq:distribution-extrapolation}
\end{equation}

Moreover, under the reward-tilt assumption of
Lemma~\ref{lem:teacher-reward-tilt}, the expected reward
$J(\lambda)=\mathbb{E}_{x\sim p_\lambda}[r(x)]$ satisfies
\begin{equation}
\frac{dJ(\lambda)}{d\lambda}
=
\frac{1}{\beta}
\mathrm{Var}_{x\sim p_\lambda}[r(x)]
\geq 0,
\label{eq:reward-monotonicity}
\end{equation}
with strict inequality whenever $r$ is non-constant
$p_\lambda$-almost surely.
\end{proposition}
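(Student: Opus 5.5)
We prove the two claims of Proposition~\ref{prop:reward-monotonicity} separately. For the optimizer, we use the standard Gibbs variational argument, the same one that underlies Lemma~\ref{lem:teacher-reward-tilt}. Set $g(x)=\lambda\log[p_T(x)/p_{\mathrm{ref}}(x)]$. For any $p\ll p_{\mathrm{ref}}$ we write
\begin{equation*}
\mathcal{J}_\lambda(p)=\mathbb{E}_{p}\left[g-\log\frac{p}{p_{\mathrm{ref}}}\right]
=-\mathbb{E}_p\left[\log\frac{p}{p_{\mathrm{ref}}e^{g}}\right].
\end{equation*}
Since $p_{\mathrm{ref}}e^{g}=p_T^\lambda p_{\mathrm{ref}}^{1-\lambda}$, the right-hand side equals
\begin{equation*}
-\mathrm{KL}(p\,\|\,p_\lambda)+\log Z_\lambda,
\qquad
Z_\lambda=\int p_T^\lambda p_{\mathrm{ref}}^{1-\lambda}\,dx.
\end{equation*}
Here $Z_\lambda<\infty$ is assumed, so $p_\lambda$ is a well-defined probability density. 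Gibbs' inequality then shows $\mathcal{J}_\lambda(p)\le\log Z_\lambda$, with equality if and only if $p=p_\lambda$ almost everywhere. This gives both existence and uniqueness of the maximizer, with no calculus of variations needed.

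For monotonicity, we substitute the reward-tilt form from Lemma~\ref{lem:teacher-reward-tilt}, namely $\log(p_T/p_{\mathrm{ref}})=r/\beta-\log Z_T$, into Eq.~(\ref{eq:distribution-extrapolation}). This gives
\begin{equation*}
p_\lambda(x)=\frac{p_{\mathrm{ref}}(x)\,e^{\lambda r(x)/\beta}}{\widetilde Z(\lambda)},
\qquad
\widetilde Z(\lambda)=\int p_{\mathrm{ref}}\,e^{\lambda r/\beta}\,dx,
\end{equation*}
because the factor $Z_T^{-\lambda}$ cancels against the normalizer. The family $\{p_\lambda\}$ is therefore an exponential family with natural parameter $\lambda/\beta$ and sufficient statistic $r$. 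Its cumulant function is $A(\lambda)=\log\widetilde Z(\lambda)$, and the standard identities give
\begin{equation*}
A'(\lambda)=\frac{1}{\beta}\,\mathbb{E}_{p_\lambda}[r]=\frac{J(\lambda)}{\beta},
\qquad
A''(\lambda)=\frac{1}{\beta^2}\,\mathrm{Var}_{p_\lambda}[r].
\end{equation*}
It follows that $J'(\lambda)=\beta A''(\lambda)=\mathrm{Var}_{p_\lambda}[r]/\beta$.

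Equivalently, one can differentiate $J(\lambda)=\int r\,p_{\mathrm{ref}}e^{\lambda r/\beta}\,dx\,/\,\widetilde Z(\lambda)$ directly with the quotient rule. This gives $J'(\lambda)=\beta^{-1}\big(\mathbb{E}_{p_\lambda}[r^2]-\mathbb{E}_{p_\lambda}[r]^2\big)$.

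The only delicate step is justifying differentiation under the integral sign. Boundedness of $r$, say $|r|\le M$, handles it. On any compact $\lambda$-interval the integrands $r^k p_{\mathrm{ref}}e^{\lambda r/\beta}$ for $k=0,1,2$ are dominated by $M^k e^{\Lambda M/\beta}p_{\mathrm{ref}}$, which is integrable, so dominated convergence applies. Boundedness also makes $\widetilde Z(\lambda)$ finite for every $\lambda$, consistent with the finiteness assumption.

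Finally, the variance is nonnegative. It is strictly positive unless $r$ is $p_\lambda$-almost surely constant. Since $p_\lambda$ and $p_{\mathrm{ref}}$ are mutually absolutely continuous (the factor $e^{\lambda r/\beta}$ is strictly positive and bounded), this condition does not depend on $\lambda$. That yields the strict inequality.
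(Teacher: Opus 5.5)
Your proposal is correct and follows essentially the same route as the paper: the Gibbs variational identity $\mathcal{J}_\lambda(p)=\log Z_\lambda-\mathrm{KL}(p\,\|\,p_\lambda)$ gives the optimizer, and substituting the reward tilt gives $p_\lambda\propto p_{\mathrm{ref}}\,e^{\lambda r/\beta}$, which is then differentiated in $\lambda$. The paper uses the score identity $\partial_\lambda\log p_\lambda=(r-J(\lambda))/\beta$ where you use the log-partition second derivative, which is the same computation; your explicit dominated-convergence bound and your observation that $p_\lambda$ and $p_{\mathrm{ref}}$ are mutually absolutely continuous (so the strictness condition does not depend on $\lambda$) are small additions the paper leaves implicit.
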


Proofs are provided in Appendix~\ref{app:proof-reward-tilt} and Appendix~\ref{app:proof-reward-monotonicity}. The optimal distribution satisfies $\log p_\lambda=\log p_T+(\lambda-1)\left(\log p_T-\log p_{\mathrm{ref}}\right)-\log Z_\lambda$. Thus, $\lambda>1$ extrapolates beyond the teacher along the teacher-reference log-density contrast. Under the reward-tilt assumption, this contrast is proportional to the underlying reward up to scale and a constant, yielding the monotonicity result in Eq.~(\ref{eq:reward-monotonicity}). Proposition~\ref{prop:reward-monotonicity} establishes the reward interpretation of extrapolation at the distribution level, and our Gaussian transition derivation translates this principle into the extrapolative velocity target used by our \textbf{\texttt{DreOPD}}.

\subsection{Amplifying Extrapolation with a Degraded Reference}
\label{sec:degraded-reference}

Eq.~(\ref{eq:teacher-displacement}) shows that the displacement beyond the teacher scales with $v_T-v_{\mathrm{ref}}$. Reference selection therefore determines the direction and magnitude of the extrapolation.

\paragraph{Reference determines the extrapolation contrast.}

In standard KL-regularized optimization, the reference acts primarily as a conservative anchor that discourages large policy changes. In Eq.~(\ref{eq:extrapolated-velocity-target}), it has an additional geometric role. The teacher attracts the student, while the reference specifies the direction from which the target moves through and beyond the teacher. A reference close to the teacher yields a small contrast, making the extrapolation signal weak. This motivates using a mildly degraded reference to enlarge teacher-reference contrast while retains reward-aligned directions.

\paragraph{Reward-alignment model.}

We formalize this intuition through an idealized family of reward-aligned distributions. Let $p_0$ denote the distribution induced by the pretrained generator that initializes the student, and define
\begin{equation}
p_\rho(x)
=
\frac{1}{Z_\rho}
p_0(x)
\exp
\left(
\frac{\rho\,r(x)}{\beta}
\right),
\qquad
\rho\leq 1,
\label{eq:reward-alignment-family}
\end{equation}
where $\rho$ denotes the level of alignment with reward $r$. We parameterize the teacher as $p_T=p_1$ and the pretrained model as $p_0$. A degraded reference with lower reward alignment than the pretrained model is represented by $\rho<0$.

\begin{proposition}[Reference degradation amplifies extrapolation]
\label{thm:reference-amplification}
Let $p_T=p_1$ and $p_{\mathrm{ref}}=p_\rho$ be members of the family in Eq.~(\ref{eq:reward-alignment-family}). Then the extrapolated distribution $ p_{\lambda,\rho}(x) \propto p_T(x)^\lambda p_\rho(x)^{1-\lambda} $ is also a reward tilt of $p_0$:
\begin{equation}
p_{\lambda,\rho}(x)
=
\frac{1}{Z_{\lambda,\rho}}
p_0(x)
\exp
\left(
\frac{
\rho_{\mathrm{eff}}(\lambda,\rho)
r(x)}
{\beta}
\right),
\label{eq:effective-alignment-distribution}
\end{equation}
where $\rho_{\mathrm{eff}}(\lambda,\rho)=1+(\lambda-1)(1-\rho)$. Therefore, for a degraded reference $\rho_d$ and a normal reference $\rho_n$ satisfying $\rho_d<\rho_n<1$, any fixed $\lambda>1$ yields
\begin{equation}
\rho_{\mathrm{eff}}(\lambda,\rho_d)
>
\rho_{\mathrm{eff}}(\lambda,\rho_n)> 1.
\label{eq:larger-extrapolation-step}
\end{equation}

Moreover, if $r$ is nonconstant under $p_T$, their local reward sensitivities at $\lambda=1$ satisfy
\begin{equation}
\left.
\frac{
dJ_{\rho_d}(\lambda)/d\lambda
}{
dJ_{\rho_n}(\lambda)/d\lambda
}
\right|_{\lambda=1}
=
\frac{1-\rho_d}{1-\rho_n}
>
1,\
J_\rho(\lambda)
=
\mathbb{E}_{p_{\lambda,\rho}}[r(x)].
\label{eq:reference-amplification-ratio}
\end{equation}
\end{proposition}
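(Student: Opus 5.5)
The proof is a direct computation in log space, followed by a differentiation under the integral sign that mirrors Proposition~\ref{prop:reward-monotonicity}.

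\textbf{Step 1: the tilt form.} Substitute $p_T=p_1=Z_1^{-1}p_0e^{r/\beta}$ and $p_{\mathrm{ref}}=p_\rho=Z_\rho^{-1}p_0e^{\rho r/\beta}$ into $p_T^\lambda p_\rho^{1-\lambda}$. The powers of $p_0$ combine to exponent $\lambda+(1-\lambda)=1$. The reward exponents combine to
\[
\frac{r}{\beta}\bigl(\lambda+\rho(1-\lambda)\bigr)=\frac{r}{\beta}\bigl(1+(\lambda-1)(1-\rho)\bigr).
\]
The constants $Z_1^{-\lambda}Z_\rho^{\lambda-1}$ do not depend on $x$ and are absorbed into $Z_{\lambda,\rho}$. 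This gives Eq.~(\ref{eq:effective-alignment-distribution}), with the stated $\rho_{\mathrm{eff}}$.

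I need $Z_{\lambda,\rho}<\infty$. This holds because $r$ is bounded, as assumed in Lemma~\ref{lem:teacher-reward-tilt}: $e^{\rho_{\mathrm{eff}}r/\beta}$ is then bounded and $p_0$ integrates to one.

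\textbf{Step 2: the ordering.} Inequality~(\ref{eq:larger-extrapolation-step}) follows because $\rho_{\mathrm{eff}}(\lambda,\rho)$ is affine in $\rho$ with slope $-(\lambda-1)<0$ for $\lambda>1$, so it is strictly decreasing in $\rho$. Since $\rho_n<1$, we also have $\rho_{\mathrm{eff}}(\lambda,\rho_n)>\rho_{\mathrm{eff}}(\lambda,1)=1$.

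\textbf{Step 3: the sensitivity ratio.} Write $J_\rho(\lambda)=\mathbb{E}_{q_s}[r]$, where $q_s\propto p_0e^{sr/\beta}$ and $s=\rho_{\mathrm{eff}}(\lambda,\rho)$.

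The standard exponential-family identity gives
\[
\frac{d}{ds}\mathbb{E}_{q_s}[r]=\frac{1}{\beta}\mathrm{Var}_{q_s}[r].
\]
This is obtained by differentiating $\log Z_s$ twice, exactly as in the proof of Proposition~\ref{prop:reward-monotonicity}. Differentiation under the integral is justified by boundedness of $r$ and dominated convergence.

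By the chain rule, using $\partial s/\partial\lambda=1-\rho$,
\[
\frac{dJ_\rho}{d\lambda}=\frac{1-\rho}{\beta}\mathrm{Var}_{q_{s}}[r].
\]
At $\lambda=1$ we have $s=1$ for every $\rho$, so $q_1=p_T$. The variance factor is therefore the same for $\rho_d$ and $\rho_n$, namely $\mathrm{Var}_{p_T}[r]$.

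This variance is strictly positive because $r$ is nonconstant under $p_T$. That allows division and gives the ratio $(1-\rho_d)/(1-\rho_n)$. This ratio exceeds $1$ because $1-\rho_d>1-\rho_n>0$.

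\textbf{Main obstacle.} The only delicate point is the regularity needed for finiteness of the normalizers and for differentiating under the integral. I expect boundedness of $r$ to handle both directly. The step where the variance factors cancel is the key observation, and it relies on every member of the family coinciding with $p_T$ at $\lambda=1$.
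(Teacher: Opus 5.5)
Your proposal is correct and follows the paper's proof essentially step for step. It uses the same substitution collapsing $p_T^\lambda p_\rho^{1-\lambda}$ to a tilt of $p_0$ with coefficient $1+(\lambda-1)(1-\rho)$, the same affine-in-$\rho$ ordering, and the same formula $dJ_\rho/d\lambda=\frac{1-\rho}{\beta}\mathrm{Var}_{p_{\lambda,\rho}}[r]$ evaluated at $\lambda=1$, where every reference yields $p_T$. The only cosmetic difference is that you route the derivative through $s=\rho_{\mathrm{eff}}$ and the chain rule, whereas the paper differentiates the log-normalizer and log-density directly in $\lambda$.
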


Proofs are in Appendix~\ref{app:proof-reference-amplification}. At $\lambda=1$, every reference choice recovers the teacher distribution, whereas the derivative with respect to $\lambda$ scales with the contrast magnitude $1-\rho$. Proposition~\ref{thm:reference-amplification} therefore shows that, within the reward-alignment model, a lower-alignment reference increases both the effective reward-tilt coefficient and the local rate of reward change without altering the reward direction. This result motivates controlled reference degradation as a mechanism for strengthening extrapolation.

\paragraph{Practical degraded-reference construction.}

In practice, we construct the degraded reference from the same base model using controlled corruption such as weight quantization or mild noise injection into the velocity output. The degradation level must balance contrast strength against structural preservation. Insufficient degradation produces a weak teacher-reference contrast, whereas excessive or poorly structured degradation may cause 
$v_T-v_{\mathrm{ref}}$ to capture corruption artifacts rather than task-relevant differences. We therefore evaluate multiple degradation mechanisms and strengths in the following section.

\subsection{Training procedure}
\label{sec:training-algorithm}

Algorithm~\ref{alg:dr-ire} summarizes the training procedure of \texttt{\textbf{DreOPD}}. Given task-specific teachers trained by existing RL algorithms~\citep{zheng2025diffusionnft,wang2026grpo,zhao2026marble} and a degraded reference, each training round iterates over the tasks. For each task, the current student generates on-policy trajectories and the corresponding teacher and reference are queried at the visited states. Finally, the student is updated toward the closed-form target. 

\begin{algorithm}[t]
\caption{\texttt{\textbf{DreOPD}}}
\label{alg:dr-ire}
\begin{algorithmic}[1]
\Require teachers $\{v_T^{(m)}\}_{m=1}^{M}$; degraded reference $v_{\mathrm{deg}}$; student $v_\theta$; extrapolation strength $\lambda>1$; tasks $\mathcal{M}=\{1,\ldots,M\}$; prompt datasets $\{\mathcal{C}^{(m)}\}_{m=1}^{M}$
\For{each training round}
    \For{$m=1,\ldots,M$}
        \State Sample prompts $c\sim\mathcal{C}^{(m)}$
        \State Rollout the student $v_\theta$ on $c$ to obtain trajectory $\{x_{t_j}\}_{j=0}^{N}$
        \State Query $v_T^{(m)}(x_{t_j},t_j,c)$ and $v_{\mathrm{deg}}(x_{t_j},t_j,c)$ for $j\in\{0,\ldots,N-1\}$ \Comment{no grad}
        \State Compute $\mathcal{L}_m(\theta)$ in Eq.~(\ref{eq:on-policy-regression})
        \State Update $\theta$ by one optimizer step on $\mathcal{L}_m(\theta)$
    \EndFor
\EndFor
\end{algorithmic}
\end{algorithm}

\section{Experiments}\label{sec:exp}
% We evaluate whether the proposed method can improve beyond single-task RL teachers and whether it can integrate multiple reward-specialized teachers into a unified model. We consider three representative text-to-image objectives: compositional correctness measured by GenEval, text rendering measured by OCR, and aesthetic quality measured by PickScore, CLIPScore, and HPSv2.1.

\subsection{Experimental Setup}
\paragraph{Implementation details.}
We conduct all experiments with SD3.5-M~\citep{esser2024sd} at a resolution of $512\times512$, keeping the teacher and reference models frozen throughout distillation. GenEval~\citep{ghosh2023geneval} and OCR use the data splits released with FlowGRPO~\citep{liu2026flow}, while aesthetic optimization is performed on Pick-a-Pic~\citep{kirstain2023pick} and evaluated on DrawBench~\citep{saharia2022drawbench}. We also report Aesthetic~\citep{schuhmann2022aesthetics} and ImageReward~\citep{xu2023imagereward} as out-of-domain reward. For comparability with prior work, we follow the configuration of DiffusionOPD, with additional details provided in Appendix~\ref{app:experimental-details}.

\begin{table*}[!t]
\centering
{
\small
\setlength{\tabcolsep}{3.5pt}
\renewcommand{\arraystretch}{0.8}
\begin{tabular}{l|c|c|ccc}
\toprule
\multirow{2}{*}[-0.5ex]{\textbf{Method}}
& \multicolumn{1}{c|}{\textbf{GenEval Student}}
& \multicolumn{1}{c|}{\textbf{OCR Student}}
& \multicolumn{3}{c}{\textbf{Aesthetic Student}} \\
\cmidrule(lr){2-2}
\cmidrule(lr){3-3}
\cmidrule(lr){4-6}
& \textbf{GenEval}
& \textbf{OCR}
& \textbf{PickScore}
& \textbf{ClipScore}
& \textbf{HPSv2.1}\\
\midrule
\textbf{\texttt{SD3.5-M (w/o CFG)}}
& 0.2529 & 0.1377 & 20.519 & 0.2384 & 0.2052 \\
\textbf{\texttt{SD3.5-M}}
& 0.6273 & 0.5079 & 22.331 & 0.2837 & 0.2795 \\
\textbf{\texttt{Teacher}}
& 0.9470 & 0.9239 & \underline{24.034} & \textbf{0.2963} & 0.3460 \\
\midrule
\textbf{\texttt{Flow-OPD}}
& 0.9501$_{\textcolor{posgreen}{+.0031}}$ & 0.9279$_{\textcolor{posgreen}{+.0040}}$ & 24.006$_{\textcolor{negred}{-.0280}}$ & \underline{0.2957}$_{\textcolor{negred}{-.0006}}$ & 0.3445$_{\textcolor{negred}{-.0015}}$ \\
\textbf{\texttt{DiffusionOPD}}
& 0.9648$_{\textcolor{posgreen}{+.0178}}$ & 0.9246$_{\textcolor{posgreen}{+.0007}}$ & 24.008$_{\textcolor{negred}{-.0260}}$ & 0.2955$_{\textcolor{negred}{-.0008}}$ & 0.3454$_{\textcolor{negred}{-.0006}}$ \\
\rowcolor{lightbluecell}
\textbf{\texttt{Ours (w/o DeRef.)}}
& \underline{0.9708}$_{\textcolor{posgreen}{+.0238}}$ & \underline{0.9322}$_{\textcolor{posgreen}{+.0083}}$ & \underline{24.034}$_{\textcolor{posgreen}{+.0000}}$ & 0.2946$_{\textcolor{negred}{-.0017}}$ & \underline{0.3492}$_{\textcolor{posgreen}{+.0032}}$ \\
\rowcolor{lightbluecell}
\textbf{\texttt{Ours}}
& \textbf{0.9710}$_{\textcolor{posgreen}{+.0240}}$ & \textbf{0.9364}$_{\textcolor{posgreen}{+.0125}}$ & \textbf{24.037}$_{\textcolor{posgreen}{+.0030}}$ & 0.2955$_{\textcolor{negred}{-.0008}}$ & \textbf{0.3497}$_{\textcolor{posgreen}{+.0037}}$ \\
\bottomrule
\end{tabular}
}
\caption{Single-teacher distillation on three independent tasks. GenEval and OCR are reported by their respective students, while PickScore, ClipScore, and HPSv2.1 are reported by the aesthetics student. The \textbf{\texttt{Teacher}} row similarly combines the in-domain scores of three task-specific teachers. \textbf{Bold}: best; \underline{Underline}: second best. Subscripts show absolute changes from the corresponding teacher. \textbf{DeRef.} denotes the degraded reference.}
\label{tab:main_results_single}
\end{table*}

\paragraph{Single-task RL teachers.}
We use the same specialized teachers as DiffusionOPD~\citep{li2026diffusionopd}. The GenEval teacher is trained with DiffusionNFT~\citep{zheng2025diffusionnft}, while the OCR and aesthetics teachers are trained with GRPO-Guard~\citep{wang2026grpo}. The aesthetics teacher jointly optimizes an equally weighted combination of PickScore, ClipScore~\citep{hessel2021clipscore}, and HPSv2.1~\citep{wu2023hps}. Each teacher is optimized independently on its corresponding task, providing a strong task-specific target for subsequent distillation.

\paragraph{Multi-task RL baselines.}
We compare against Flow-GRPO~\citep{liu2026flow}, GRPO-Guard~\citep{wang2026grpo}, and DiffusionNFT~\citep{zheng2025diffusionnft} trained directly in the multi-task setting. These methods optimize a single model over the three tasks by alternating among their corresponding datasets. We additionally include CascadeNFT, which optimizes the tasks sequentially. These baselines represent direct approaches to obtaining a generalist model.

\paragraph{OPD baselines.}
For a fair comparison, all OPD methods use the same task-specific teachers. Flow-OPD~\citep{fang2026flowopd} performs teacher-oriented velocity regression and uses Manifold Anchor Regularization to constrain the student to a high-quality visual manifold. DiffusionOPD~\citep{li2026diffusionopd} derives a closed-form reverse-KL objective for consolidating specialized teachers. In contrast, \textbf{\texttt{DreOPD}} explicitly constructs extrapolated velocity targets from the teacher-reference contrast.

% Our method instead regresses toward the extrapolated target $v_\lambda^\star=v_T+(\lambda-1)(v_T-v_{\mathrm{ref}})$, allowing the student to move beyond rather than merely match each teacher. We report our method both with the original reference and with the proposed degraded reference (\textbf{DeRef.}).

\subsection{Single-Teacher Distillation}
\label{sec:single-teacher}

We investigate whether on-policy distillation can surpass specialized teachers within their respective domains. Table~\ref{tab:main_results_single} summarizes three independent experiments: separate students are distilled for GenEval, OCR, and aesthetics, with the aesthetics student evaluated by PickScore, ClipScore, and HPSv2.1. Accordingly, the \textbf{\texttt{Teacher}} row combines the in-domain scores of the three corresponding task-specific teachers rather than representing a single teacher model.

The conventional OPD baselines remain close to their corresponding teachers. Flow-OPD yields small gains on GenEval and OCR but slightly decreases all three aesthetics metrics. DiffusionOPD improves GenEval more substantially, while remaining close to the OCR teacher and slightly underperforming the aesthetics teacher on its three evaluation metrics. These results are consistent with objectives centered primarily on teacher matching.

In contrast, our method improves GenEval from $0.9470$ to $0.9710$, OCR from $0.9239$ to $0.9364$, PickScore from $24.034$ to $24.037$ and HPSv2.1 from $0.3460$ to $0.3497$. Overall, it surpasses the corresponding teachers on four of five metrics. Its consistent gain over the non-degraded variant further demonstrates that the degraded reference strengthens extrapolation beyond the teachers.

\begin{table*}[!t]
\centering
\setlength{\tabcolsep}{1mm}
{
\scriptsize
\renewcommand{\arraystretch}{0.9}
\begin{tabular}{l|ccccc|cc|c}
\toprule
\textbf{Model} & \textbf{GenEval} & \textbf{OCR} & \textbf{PickScore} & \textbf{ClipScore} & \textbf{HPSv2.1} & \textbf{Aesthetic} & \textbf{ImgReward} & \textbf{Avg.} \\
\midrule
\textbf{\texttt{SD3.5-M$^\dagger$}}
& 0.2529 & 0.1377 & 20.519 & 0.2384 & 0.2052 & 5.161 & -0.5471 & 0.0000 \\
\textbf{\texttt{SD3.5-M}}
& 0.6273 & 0.5079 & 22.331 & 0.2837 & 0.2795 & 5.396 & 0.8324 & 0.5219 \\
\midrule
\multicolumn{7}{l}{\textit{Single-Task RL Teachers}} \\
\textbf{\texttt{GenEval}}
& \cellcolor{lightgraycell}{0.9470} & 0.6286 & 20.084 & 0.2870 & 0.2644 & 5.246 & 0.8976 & 0.5755 \\
\textbf{\texttt{OCR}}
& 0.6562 & \cellcolor{lightgraycell}{0.9239} & 22.225 & 0.2919 & 0.2720 & 5.266 & 0.8881 & 0.5963 \\
\textbf{\texttt{Aes}}
& 0.4935 & 0.5014 & \cellcolor{lightgraycell}{\underline{24.034}} & \cellcolor{lightgraycell}{\underline{0.2963}} & \cellcolor{lightgraycell}{\underline{0.3460}} & 6.232 & 1.5071 & 0.8120 \\
\midrule
\multicolumn{7}{l}{\textit{Multi-Task RL}} \\
\textbf{\texttt{Flow-GRPO}}
& 0.7399 & 0.7673 & 22.677 & 0.2893 & 0.2901 & 5.469 & 1.0539 & 0.6527 \\
\textbf{\texttt{GRPO-Guard}}
& 0.9002 & 0.9278 & 23.197 & 0.2959 & 0.3137 & 5.824 & 1.3582 & 0.8394 \\
\textbf{\texttt{DiffNFT}}
& 0.9510 & \textbf{0.9491} & 23.182 & 0.2869 & 0.2812 & 5.399 & 1.0699 & 0.6939 \\
\textbf{\texttt{CascadeNFT}}
& 0.9376 & 0.8827 & 23.803 & 0.2920 & 0.3305 & 6.006 & 1.4907 & 0.9029 \\
\midrule
\multicolumn{7}{l}{\textit{On-Policy Distillation}} \\
\textbf{\texttt{Flow-OPD}}
& 0.9395$_{\textcolor{negred}{-.0075}}$ & 0.8756$_{\textcolor{negred}{-.0048}}$ & 23.262$_{\textcolor{negred}{-.7720}}$ & \textbf{0.2975}$_{\textcolor{posgreen}{+.0012}}$ & 0.3070$_{\textcolor{negred}{-.0039}}$ & 5.692$_{\textcolor{negred}{-0.540}}$ & 1.3251$_{\textcolor{negred}{-.1820}}$ & 0.8189 \\
\textbf{\texttt{DiffOPD}}
& 0.9607$_{\textcolor{posgreen}{+.0137}}$ & 0.9242$_{\textcolor{posgreen}{+.0003}}$ & 23.980$_{\textcolor{negred}{-.0540}}$ & 0.2962$_{\textcolor{negred}{-.0001}}$ & 0.3422$_{\textcolor{negred}{-.0038}}$ & 6.191$_{\textcolor{negred}{-0.041}}$ & 1.5017$_{\textcolor{negred}{-.0054}}$ & 0.9680 \\
\rowcolor{lightbluecell}
\textbf{\texttt{Ours$^\dagger$}}
& \underline{0.9668}$_{\textcolor{posgreen}{+.0198}}$ & 0.9281$_{\textcolor{posgreen}{+.0042}}$ & 24.032$_{\textcolor{negred}{-.0020}}$ & 0.2956$_{\textcolor{negred}{-.0007}}$ & \textbf{0.3487}$_{\textcolor{posgreen}{+.0027}}$ & \underline{6.239}$_{\textcolor{posgreen}{+0.007}}$ & \underline{1.5172}$_{\textcolor{posgreen}{+.0001}}$ & \underline{0.9841} \\
\rowcolor{lightbluecell}
\textbf{\texttt{Ours}}
& \textbf{0.9681}$_{\textcolor{posgreen}{+.0211}}$ & \underline{0.9362}$_{\textcolor{posgreen}{+.0123}}$ & \textbf{24.035}$_{\textcolor{posgreen}{+.0010}}$ & 0.2959$_{\textcolor{negred}{-.0004}}$ & \textbf{0.3487}$_{\textcolor{posgreen}{+.0027}}$ & \textbf{6.292}$_{\textcolor{posgreen}{+0.060}}$ & \textbf{1.5245}$_{\textcolor{posgreen}{+.0174}}$ & \textbf{0.9939} \\
\bottomrule
\end{tabular}
}
\caption{Results of single-task RL, multi-task RL and OPD methods. \textbf{\texttt{SD3.5-M}}$^\dagger$: without CFG; \textbf{\texttt{Ours}}$^\dagger$: without degraded reference. \textbf{Bold}: best; \underline{Underline}: second-best; \colorbox{lightbluecell}{Blue-colored}: Ours; \colorbox{lightgraycell}{Gray-colored}: In-Domain reward; Average (Avg.): the mean of the min-max normalized scores across all metrics. Subscripts show absolute changes from the corresponding teacher. Out-Of-Domain rewards: Aesthetic and ImgReward.}
\label{tab:main_results}
\end{table*}

\subsection{Multi-Teacher Distillation}

Table~\ref{tab:main_results} compares task-specific teachers, multi-task RL, and multi-teacher OPD. Each teacher excels in its own domain but transfers poorly to other objectives. Each teacher excels in its own domain but transfers poorly across objectives, motivating their consolidation into a single generalist model.

Multi-task RL improves task coverage but suffers from optimization interference. DiffusionNFT achieves the highest OCR score at the expense of perceptual quality, while GRPO-Guard is more balanced but remains suboptimal overall. CascadeNFT performs better through sequential optimization but still falls short of the strongest distillation methods. These results reveal the task conflict induced by jointly optimizing heterogeneous rewards.

Multi-teacher OPD instead separates teacher specialization from knowledge consolidation. Flow-OPD underperforms the teachers on several metrics, while DiffusionOPD largely preserves their capabilities. Our \textbf{\texttt{DreOPD}} achieves the highest average score of $0.9939$, outperforming the corresponding teachers on GenEval, OCR, HPSv2.1, Aesthetic and ImageReward metrics while matching PickScore and retaining comparable ClipScore. 

The degraded reference further raises the average score from $0.9841$ to $0.9939$, with the largest gains on OCR and aesthetic metrics. This supports our hypothesis that enlarging a structurally meaningful teacher-reference contrast strengthens extrapolation. Figure~\ref{fig:qualitative} shows qualitative results.

\begin{figure*}[!t]
\centering
\includegraphics[width=1.0\textwidth]{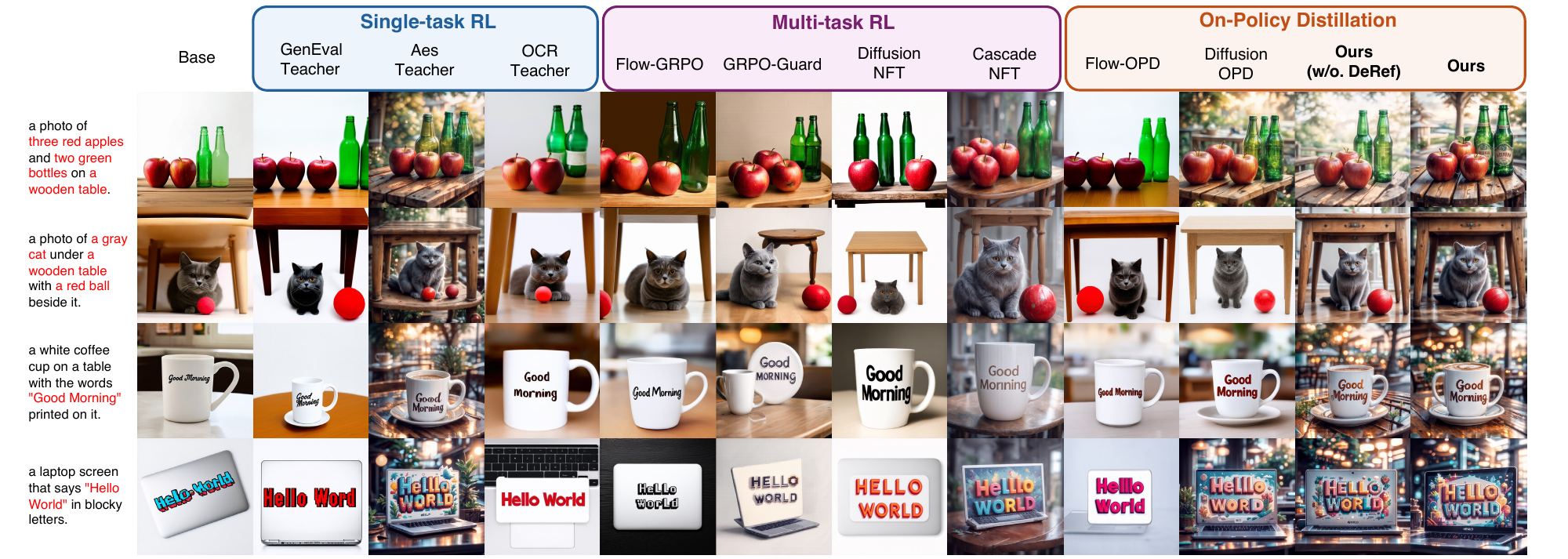}
\caption{Qualitative comparison of the base model, single-task teachers, multi-task RL, OPD baselines, and our method.}
\label{fig:qualitative}
\end{figure*}

\subsection{Ablation Studies}\label{sec:ablations}

\paragraph{Impact of factor $\lambda$.}

Table~\ref{tab:lambda} studies the effect of $\lambda$ using the original reference. Interpolation ($\lambda<1$) generally underperforms teacher matching ($\lambda=1$), whereas extrapolation ($\lambda>1$) performs better overall. We use $\lambda=1.25$ which provides the best cross-task balance. Although $\lambda=1.5$ further improves OCR, it degrades GenEval and perceptual quality, suggesting that excessive extrapolation may over-emphasize one capability at the cost of overall alignment.

\begin{table}[!h]
\centering
{
\small
\renewcommand{\arraystretch}{0.95}

\begin{tabular}{@{}lccccccc@{}}
\toprule
\multirow{2}{*}[-0.5ex]{\textbf{$\lambda$}}
& \multicolumn{5}{c}{\textbf{In-Domain}}
& \multicolumn{2}{c}{\textbf{OOD}}\\
\cmidrule(lr){2-6} \cmidrule(lr){7-8}
& \textbf{GE}
& \textbf{OCR}
& \textbf{Pick}
& \textbf{Clip}
& \textbf{HPS}
& \textbf{Aes}
& \textbf{IR} \\
\midrule
0.5
& 0.921 & 0.899 & 23.707 & 0.2941 & 0.3241 & 6.03 & 1.411 \\
0.75
& 0.950 & 0.918 & 23.925 & \underline{0.2961} & 0.3364 & 6.13 & 1.483 \\
1.0
& 0.958 & 0.926 & 23.988 & \textbf{0.2963} & 0.3426 & 6.20 & 1.504 \\
\rowcolor{lightbluecell}
\textbf{1.25}
& \textbf{0.968} & \underline{0.936} & \textbf{24.034} & 0.2959 & \underline{0.3487} & \textbf{6.29} & \textbf{1.525} \\
1.5
& \underline{0.961} & \textbf{0.938} & \underline{24.019} & 0.2945 & \textbf{0.3505} & \underline{6.26} & \underline{1.523} \\
\bottomrule
\end{tabular}
}
\caption{Ablation results on $\lambda$. In-domain: \textbf{G}en\textbf{E}val, \textbf{OCR}, \textbf{Pick}Score, \textbf{Clip}Score, and \textbf{HPS}v2.1; OOD: \textbf{Aes}thetic and \textbf{I}mage\textbf{R}eward. \textbf{Bold}: best; \underline{Underline}: second best.}
\label{tab:lambda}
\end{table}

\paragraph{Degraded reference.}

\begin{table}[!h]
\centering
{
\small
\renewcommand{\arraystretch}{0.95}

\begin{tabular}{@{}lcccccccc@{}}
\toprule
\multirow{2}{*}[-0.3ex]{\textbf{Method}}
& \multicolumn{5}{c}{\textbf{In-Domain}}
& \multicolumn{2}{c}{\textbf{OOD}}
& \multirow{2}{*}[-0.3ex]{\textbf{Avg.}} \\
\cmidrule(lr){2-6} \cmidrule(lr){7-8}
& \textbf{GE}
& \textbf{OCR}
& \textbf{Pick}
& \textbf{Clip}
& \textbf{HPS}
& \textbf{Aes}
& \textbf{IR}
& \\
\midrule

w/o DeRef.
& \underline{0.967} & 0.928 & 24.03 & \underline{0.2956} & 0.3487 & 6.24 & 1.517 & 0.558 \\
\midrule

\multicolumn{9}{@{}l}{\textit{Reference Velocity Quantization}} \\
4-bit
& 0.962 & 0.923 & 24.03 & 0.2950 & 0.3482 & 6.28 & \underline{1.524} & 0.566 \\
\rowcolor{lightbluecell}
\textbf{8-bit}
& \textbf{0.968} & \textbf{0.936} & \underline{24.04} & \textbf{0.2959} & 0.3487 & \underline{6.29} & \textbf{1.525} & \textbf{0.926} \\
\midrule

\multicolumn{9}{@{}l}{\textit{Reference Weight Quantization}} \\
4-bit
& 0.959 & 0.920 & 23.96 & 0.2929 & 0.3489 & \textbf{6.30} & 1.523 & 0.372 \\
8-bit
& 0.961 & \underline{0.931} & \textbf{24.05} & 0.2949 & \underline{0.3490} & 6.28 & \textbf{1.525} & \underline{0.739} \\
\midrule

\multicolumn{9}{@{}l}{\textit{Reference Velocity Gaussian Perturbation}} \\
$\sigma=0.01$
& 0.959 & 0.923 & 24.01 & 0.2954 & 0.3477 & 6.27 & 1.520 & 0.358 \\
$\sigma=0.1$
& 0.960 & 0.925 & \underline{24.04} & 0.2948 & \textbf{0.3491} & 6.28 & 1.521 & 0.594 \\
% \midrule

% \multicolumn{9}{@{}l}{\textit{Time-Embedding Gaussian Perturbation}} \\
% $\sigma=0.01$
% & .964 & .932 & .9238 & .2953 & .3478 & 6.28 & 1.515 & .521 \\
% $\sigma=0.1$
% & .958 & .928 & .9236 & .2948 & .3489 & 6.28 & 1.524 & .593 \\
\bottomrule
\end{tabular}
}
\caption{Ablation results on degraded reference. Rewards: \textbf{G}en\textbf{E}val, \textbf{OCR}, \textbf{Pick}Score, \textbf{Clip}Score, \textbf{HPS}v2.1, \textbf{Aes}thetic and \textbf{I}mage\textbf{R}eward. \textbf{Bold}: best; \underline{Underline}: second best.}

\label{tab:reference}
\end{table}

Table~\ref{tab:reference} compares different degradation strategies. Moderate degradation performs best, with 8-bit velocity quantization achieving the highest average score. This suggests that a mildly degraded reference can enlarge the teacher-reference contrast and provide a more informative extrapolation direction. However, overly weak degradation gives limited contrast, whereas stronger degradation disrupts the reference structure and yields inconsistent gains by making the extrapolation direction less reliable. Appendix~\ref{app:degraded-reference-construction} reports the metrics of each degraded reference.

\paragraph{Noise level.}
As shown in Figure~\ref{fig:exp_noise}, reducing noise level consistently improves performance. Deterministic ODE sampling performs best, followed by noise levels of $0.3$, $0.5$ and $0.7$. This suggests that the closed-form target does not require stochastic exploration to provide an effective learning signal. At the same noise level of $0.3$, the policy-gradient baseline performs the worst, supporting our analysis that direct policy optimization introduces additional variance and lead to inferior results.

\begin{figure*}[h]
\centering
\includegraphics[width=1.0\textwidth]{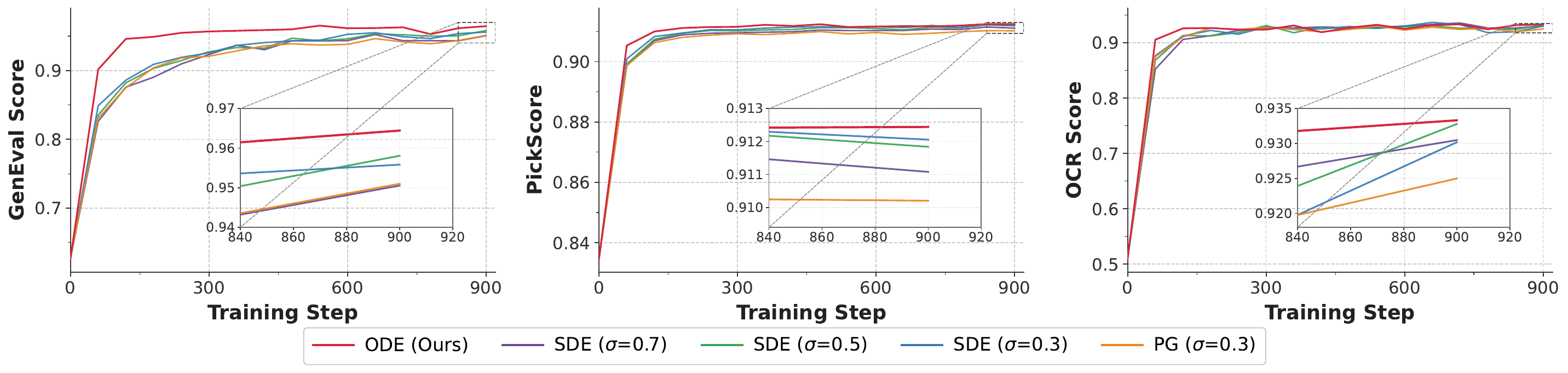}
\caption{Ablation results on the noise level. We compare deterministic ODE sampling with SDE sampling at noise levels $0.3$, $0.5$, and $0.7$, as well as a policy-gradient variant at noise level $0.3$, on GenEval, PickScore, and OCR.}
\label{fig:exp_noise}
\end{figure*}

\section{Conclusion}

We introduced \texttt{\textbf{DreOPD}}, a degraded-reference extrapolative on-policy distillation method for flow-matching models. \texttt{\textbf{DreOPD}} translates trajectory-level implicit reward extrapolation into a closed-form velocity target, extending OPD from teacher imitation to teacher-reference extrapolation while retaining regression-based training. We further showed that a mildly degraded reference can strengthen a reward-aligned contrast without disrupting generative structure. Across multiple settings, \texttt{\textbf{DreOPD}} achieves the best average performance over the baselines, while surpassing teachers on most metrics. These results establish reward extrapolation as an effective framework for consolidating specialized flow models while improving beyond teacher imitation.

\bibliography{ref}
\bibliographystyle{iclr2027_conference}

\clearpage

\appendix
\section{Teacher Surpassing Visualization}

To further visualize the multi-task capability integration, we provide a radar plot in Figure~\ref{fig:app-teacher}. Our \textbf{\texttt{DreOPD}} outperforms the corresponding teacher on six out of seven metrics, with only a slight decrease on ClipScore, and achieves the highest average score overall. This indicates that our student does not merely average teacher behaviors, but successfully consolidates and extrapolates their strengths across tasks.

\begin{figure}[h]
    \centering
    \includegraphics[width=0.5\textwidth]{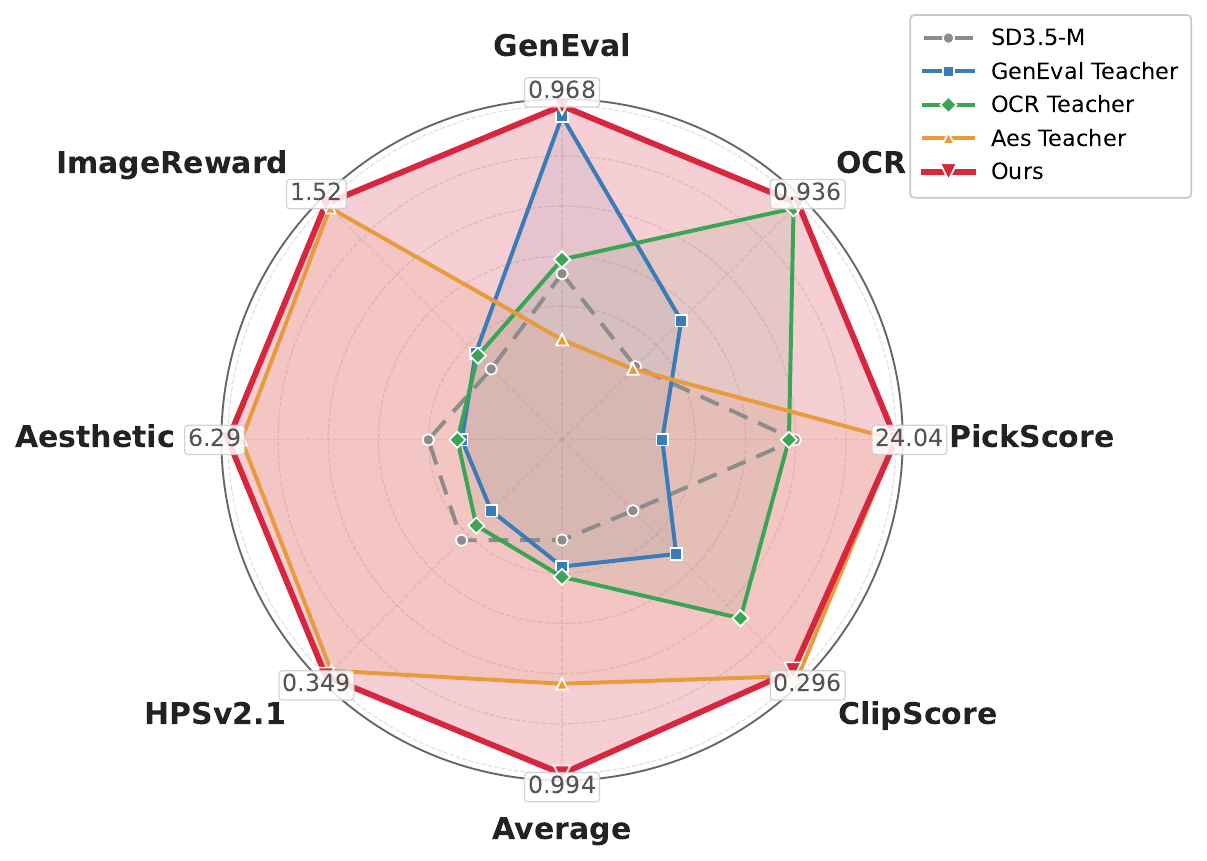}
    \caption{Visualization of multi-task \textbf{\texttt{DreOPD}} performance. We compare the student trained by \textbf{\texttt{DreOPD}} with the corresponding task-specific teachers across metrics. }
    \label{fig:app-teacher}
\end{figure}

\section{Derivations and Proofs}
\label{app:proofs}

\subsection{Derivation of the Closed-Form Velocity Target}
\label{app:velocity-target-derivation}
This section provides the full derivation of the closed-form velocity target in Eq.~(\ref{eq:extrapolated-velocity-target}).

\begin{equation*}
\small
v_\lambda^\star=
v_T
+
(\lambda-1)
\left(
v_T-v_{\mathrm{ref}}
\right).
\end{equation*}

We first decompose the trajectory-level objective into conditional transition objectives and then solve each conditional objective under the shared-covariance Gaussian transition model. Finally, we clarify the relationship between the resulting pointwise optimizer and the on-policy regression objective used for training.

\subsubsection{Trajectory-level Objective Decomposition}
Fix a conditioning input $c$. Following the reverse-time sampling convention, let
\[
\small
1=t_0>t_1>\cdots>t_N=0,
\qquad
\Delta t_j=t_{j+1}-t_j<0.
\]
A trajectory is denoted by
$\tau=(x_{t_0},x_{t_1},\ldots,x_{t_N})$.
The student, teacher, and reference trajectory distributions factorize as
\begin{align}
\small
\Pi_a(\tau\mid c)
=
p(x_{t_0})
\prod_{j=0}^{N-1}
\pi_a^{(j)}
\left(
x_{t_{j+1}}\mid x_{t_j},c
\right),
\label{eq:app-trajectory-factorization}
\end{align}
where $a\in\{\theta,T,\mathrm{ref}\}$ and all three models share the same initial noise distribution $p(x_{t_0})$.

The trajectory-level extrapolation objective is
\begin{align}
\small
\mathcal{J}(\theta;c)
=
\mathbb{E}_{\tau\sim\Pi_\theta(\cdot\mid c)}
\left[
\lambda
\log
\frac{\Pi_T(\tau\mid c)}
     {\Pi_{\mathrm{ref}}(\tau\mid c)}
-
\log
\frac{\Pi_\theta(\tau\mid c)}
     {\Pi_{\mathrm{ref}}(\tau\mid c)}
\right].
\label{eq:app-trajectory-objective}
\end{align}

Because the initial distribution $p(x_{t_0})$ is shared, it cancels from each
trajectory likelihood ratio. In particular,

\begin{equation}
\small
\begin{gathered}
\log
\frac{\Pi_T(\tau\mid c)}
     {\Pi_{\mathrm{ref}}(\tau\mid c)}
=
\log
\frac{
p(x_{t_0})
\prod_{j=0}^{N-1}
\pi_T^{(j)}
\left(
x_{t_{j+1}}\mid x_{t_j},c
\right)
}{
p(x_{t_0})
\prod_{j=0}^{N-1}
\pi_{\mathrm{ref}}^{(j)}
\left(
x_{t_{j+1}}\mid x_{t_j},c
\right)
}=
\sum_{j=0}^{N-1}
\log
\frac{
\pi_T^{(j)}
\left(
x_{t_{j+1}}\mid x_{t_j},c
\right)
}{
\pi_{\mathrm{ref}}^{(j)}
\left(
x_{t_{j+1}}\mid x_{t_j},c
\right)
},
\\
\log
\frac{\Pi_\theta(\tau\mid c)}
     {\Pi_{\mathrm{ref}}(\tau\mid c)}=
\sum_{j=0}^{N-1}
\log
\frac{
\pi_\theta^{(j)}
\left(
x_{t_{j+1}}\mid x_{t_j},c
\right)
}{
\pi_{\mathrm{ref}}^{(j)}
\left(
x_{t_{j+1}}\mid x_{t_j},c
\right)
}.
\end{gathered}
\label{eq:app-teacher-reference-ratio}
\end{equation}
Let $d_{\theta,j}(x_{t_j}\mid c)$ denote the marginal distribution of $x_{t_j}$ induced by the student trajectory distribution $\Pi_\theta(\cdot\mid c)$. Substituting Eq.~(\ref{eq:app-teacher-reference-ratio}) into Eq.~(\ref{eq:app-trajectory-objective}) and applying the tower property gives

\begin{equation}
\small
\begin{aligned}
\mathcal{J}(\theta)
=
\sum_{j=0}^{N-1}
\mathbb{E}_{x_{t_j}\sim d_{\theta,j}(\cdot\mid c)}
\Bigg[
\lambda\,
\mathbb{E}_{
x_{t_{j+1}}
\sim
\pi_\theta^{(j)}
}
\left[
\log
\frac{
\pi_T^{(j)}
}{
\pi_{\mathrm{ref}}^{(j)}
}
\right]
-
\mathrm{KL}
\left(
\pi_\theta^{(j)}
(\cdot\mid x_{t_j},c)
\,\middle\|\,
\pi_{\mathrm{ref}}^{(j)}
(\cdot\mid x_{t_j},c)
\right)
\Bigg].
\end{aligned}
\label{eq:app-trajectory-decomposition}
\end{equation}

Equation~(\ref{eq:app-trajectory-decomposition}) is an exact decomposition of the trajectory-level objective.

\subsubsection{Conditional Gaussian Transition Objective}

At a fixed state $(x_{t_j},t_j,c)$, suppose that the student, teacher, and reference transitions are Gaussian with a shared covariance:
\begin{equation}
\small
\begin{gathered}
\pi_a^{(j)}
\left(
x_{t_{j+1}}\mid x_{t_j},c
\right)
=
\mathcal{N}
\left(
x_{t_{j+1}};
\mu_a^{(j)},
\Sigma_{t_j}
\right),
\\
\mu_a^{(j)}
=
x_{t_j}
+
v_a(x_{t_j},t_j,c)\Delta t_j,
\Sigma_{t_j}=
\sigma_{t_j}^2|\Delta t_j|I
=
\sigma_{t_j}^2(-\Delta t_j)I .
\end{gathered}
\label{eq:app-shared-gaussian-transition}
\end{equation}

The final equality follows from $\Delta t_j<0$. For Gaussian distributions with the same covariance matrix,
\begin{equation}
\small
\mathrm{KL}
\left(
\mathcal{N}(\mu_a,\Sigma)
\,\middle\|\,
\mathcal{N}(\mu_b,\Sigma)
\right)
=
\frac{1}{2}
(\mu_a-\mu_b)^\top
\Sigma^{-1}
(\mu_a-\mu_b).
\label{eq:app-shared-gaussian-kl}
\end{equation}

Moreover, for
$X\sim\mathcal{N}(\mu_\theta,\Sigma)$,
\begin{equation}
\small
\begin{aligned}
&\mathbb{E}
\left[
\log
\frac{
\mathcal{N}(X;\mu_T,\Sigma)
}{
\mathcal{N}(X;\mu_{\mathrm{ref}},\Sigma)
}
\right]
\\
&=
-\frac{1}{2}
\mathbb{E}
\left[
(X-\mu_T)^\top
\Sigma^{-1}
(X-\mu_T)
\right]+\frac{1}{2}
\mathbb{E}
\left[
(X-\mu_{\mathrm{ref}})^\top
\Sigma^{-1}
(X-\mu_{\mathrm{ref}})
\right]
\\
&=
-\frac{1}{2}
\Big[
\operatorname{tr}(\Sigma^{-1}\Sigma)
+
(\mu_\theta-\mu_T)^\top
\Sigma^{-1}
(\mu_\theta-\mu_T)
\Big]+
\frac{1}{2}
\Big[
\operatorname{tr}(\Sigma^{-1}\Sigma)
+
(\mu_\theta-\mu_{\mathrm{ref}})^\top
\Sigma^{-1}
(\mu_\theta-\mu_{\mathrm{ref}})
\Big]
\\
&=
\frac{1}{2}
\left\|
\mu_\theta-\mu_{\mathrm{ref}}
\right\|_{\Sigma^{-1}}^2
-
\frac{1}{2}
\left\|
\mu_\theta-\mu_T
\right\|_{\Sigma^{-1}}^2,
\end{aligned}
\label{eq:app-gaussian-log-ratio}
\end{equation}
where $\|z\|_{\Sigma^{-1}}^2=z^\top\Sigma^{-1}z$. The trace terms cancel because all three transitions share the same covariance.

Applying Eq.~(\ref{eq:app-shared-gaussian-kl}) and Eq.~(\ref{eq:app-gaussian-log-ratio}) at step $j$, the term of expectation in Eq.~(\ref{eq:app-trajectory-decomposition}) becomes
\begin{equation}
\small
\begin{aligned}
\mathcal{J}_j^{\mathrm{cond}}(\mu_\theta^{(j)})
&=
\lambda\,
\mathbb{E}_{\pi_\theta^{(j)}}
\left[
\log
\frac{\pi_T^{(j)}}{\pi_{\mathrm{ref}}^{(j)}}
\right]
-
\mathrm{KL}
\left(
\pi_\theta^{(j)}
\,\middle\|\,
\pi_{\mathrm{ref}}^{(j)}
\right)
\\
&=
\underbrace{
\frac{\lambda}{2}
\left(
\left\|
\mu_\theta^{(j)}
-
\mu_{\mathrm{ref}}^{(j)}
\right\|_{\Sigma_{t_j}^{-1}}^2
-
\left\|
\mu_\theta^{(j)}
-
\mu_T^{(j)}
\right\|_{\Sigma_{t_j}^{-1}}^2
\right)}_{\lambda\,
\mathbb{E}
\left[
\log{\pi_T^{(j)}}/{\pi_{\mathrm{ref}}^{(j)}}
\right]}-
\underbrace{
\frac{1}{2}
\left\|
\mu_\theta^{(j)}
-
\mu_{\mathrm{ref}}^{(j)}
\right\|_{\Sigma_{t_j}^{-1}}^2}_{\mathrm{KL}
\left(
\pi_\theta^{(j)}
\,\middle\|\,
\pi_{\mathrm{ref}}^{(j)}
\right)}
\\
&=
-\frac{1}{2}
\left[
\lambda
\left\|
\mu_\theta^{(j)}
-
\mu_T^{(j)}
\right\|_{\Sigma_{t_j}^{-1}}^2
\right.\left.
-
(\lambda-1)
\left\|
\mu_\theta^{(j)}
-
\mu_{\mathrm{ref}}^{(j)}
\right\|_{\Sigma_{t_j}^{-1}}^2
\right].
\end{aligned}
\label{eq:app-conditional-objective}
\end{equation}

Therefore, under $\Sigma_{t_j}^{-1}=[\sigma_{t_j}^2(-\Delta t_j)]^{-1}I$, maximizing the conditional objective is equivalent to minimizing
\begin{equation}
\small
\begin{aligned}
\ell_j(\mu_\theta)
&=
\frac{1}{2}
\left[
\lambda
\left\|
\mu_\theta^{(j)}
-
\mu_T^{(j)}
\right\|_{\Sigma_{t_j}^{-1}}^2
-
(\lambda-1)
\left\|
\mu_\theta^{(j)}
-
\mu_{\mathrm{ref}}^{(j)}
\right\|_{\Sigma_{t_j}^{-1}}^2
\right]
\\
&=
\frac{
\lambda
\left\|
\mu_\theta^{(j)}-\mu_T^{(j)}
\right\|^2
-
(\lambda-1)
\left\|
\mu_\theta^{(j)}-\mu_{\mathrm{ref}}^{(j)}
\right\|^2
}{
2\sigma_{t_j}^2(-\Delta t_j)
}.
\end{aligned}
\label{eq:app-conditional-mean-loss}
\end{equation}

\subsubsection{Closed-Form Optimizer in Velocity Space}

From Eq.~(\ref{eq:app-shared-gaussian-transition}), the difference between any two transition means satisfies
\begin{equation}
\small
\mu_a^{(j)}-\mu_b^{(j)}
=
\Delta t_j
\left[
v_a(x_{t_j},t_j,c)
-
v_b(x_{t_j},t_j,c)
\right].
\label{eq:app-mean-velocity-difference}
\end{equation}

Substituting Eq.~(\ref{eq:app-mean-velocity-difference}) into Eq.~(\ref{eq:app-conditional-mean-loss}) gives
\begin{equation}
\small
\begin{aligned}
\ell_j(v_\theta)
&=
\frac{(\Delta t_j)^2}
     {2\sigma_{t_j}^2(-\Delta t_j)}
\left[
\lambda
\left\|
v_\theta-v_T
\right\|^2
-
(\lambda-1)
\left\|
v_\theta-v_{\mathrm{ref}}
\right\|^2
\right]
\\
&=
\kappa_{t_j}
\left[
\lambda
\left\|
v_\theta-v_T
\right\|^2
-
(\lambda-1)
\left\|
v_\theta-v_{\mathrm{ref}}
\right\|^2
\right],
\end{aligned}
\label{eq:app-conditional-velocity-loss}
\end{equation}

where $\kappa_{t_j}={-\Delta t_j}/{2\sigma_{t_j}^2}>0$. Although Eq.~(\ref{eq:app-conditional-velocity-loss}) contains a negative quadratic term, the overall objective is strictly convex in $v_\theta$. To see this, expand the two squared distances:
\begin{equation}
\small
\begin{aligned}
\frac{\ell_j(v_\theta)}{\kappa_{t_j}}&=
\lambda
\left(
\|v_\theta\|^2
-
2v_\theta^\top v_T
+
\|v_T\|^2
\right)-
(\lambda-1)
\left(
\|v_\theta\|^2
-
2v_\theta^\top v_{\mathrm{ref}}
+
\|v_{\mathrm{ref}}\|^2
\right)
\\
&=
\|v_\theta\|^2
-
2v_\theta^\top
\left[
\lambda v_T-(\lambda-1)v_{\mathrm{ref}}
\right]+
\lambda\|v_T\|^2
-
(\lambda-1)\|v_{\mathrm{ref}}\|^2.
\label{eq:app-velocity-loss-expanded}
\end{aligned}
\end{equation}

The coefficient of $\|v_\theta\|^2$ is $\lambda-(\lambda-1)=1$. Consequently, the Hessian is $\nabla_{v_\theta}^2\ell_j(v_\theta)=2\kappa_{t_j}I\succ0$ and the conditional objective has a unique finite minimizer.

Define $v_\lambda^\star=
\lambda v_T-(\lambda-1)v_{\mathrm{ref}}=
v_T+(\lambda-1)(v_T-v_{\mathrm{ref}})$. Completing the square in Eq.~(\ref{eq:app-velocity-loss-expanded}) yields

\begin{equation}
\small
\begin{aligned}
\frac{\ell_j(v_\theta)}{\kappa_{t_j}}
&=
\left\|
v_\theta-v_\lambda^\star
\right\|^2
+
\lambda\|v_T\|^2
-
(\lambda-1)\|v_{\mathrm{ref}}\|^2
-
\left\|v_\lambda^\star\right\|^2
\\&=
\left\|
v_\theta-v_\lambda^\star
\right\|^2
-
\lambda(\lambda-1)
\left\|
v_T-v_{\mathrm{ref}}
\right\|^2.
\end{aligned}
\label{eq:app-completing-square}
\end{equation}

The second term $\lambda(\lambda-1)\left\|v_T-v_{\mathrm{ref}}\right\|^2$ is independent of $v_\theta$. Hence, $\arg\min_{v_\theta}\ell_j(v_\theta)=v_\lambda^\star$.

\subsection{From SDE to ODE}
\label{app:sde-to-ode}

Our closed-form target is derived from stochastic transitions with a shared Gaussian covariance. In practice, we follow DiffusionOPD~\citep{li2026diffusionopd} and use deterministic ODE rollouts, which perform best in our ablation. The SDE conditional objective can be written as
\begin{equation}
\small
\begin{aligned}
&\ell_j^{\mathrm{SDE}}(v_\theta)
=
\kappa_{t_j}
\left[
\lambda\|v_\theta-v_T\|^2
-
(\lambda-1)\|v_\theta-v_{\mathrm{ref}}\|^2
\right]=
\kappa_{t_j}
\left\|
v_\theta-v_\lambda^\star
\right\|^2
+
C_j,
\end{aligned}
\label{eq:app-sde-target}
\end{equation}
where $v_\lambda^\star=v_T+(\lambda-1)(v_T-v_{\mathrm{ref}})$, $\kappa_{t_j}=-\Delta t_j/(2\sigma_{t_j}^2)>0$ and $C_j$ is independent of $v_\theta$ at a fixed visited state. Therefore, the pointwise optimizer $v_\lambda^\star$ is independent of the shared transition noise.

Under the deterministic ODE update, the student, teacher, and reference induce the transition means
\begin{equation}
\small
\begin{aligned}
\mu_a^{(j)}
&=
x_{t_j}
+
v_a(x_{t_j},t_j,c)\Delta t_j,
\quad
a\in\{\theta,T,\mathrm{ref}\}.
\end{aligned}
\label{eq:app-ode-transition-means}
\end{equation}

The extrapolated transition target is therefore
\begin{equation}
\small
\begin{aligned}
\mu_\lambda^\star
=
\mu_T^{(j)}
+
(\lambda-1)
\left(
\mu_T^{(j)}-\mu_{\mathrm{ref}}^{(j)}
\right)
=
x_{t_j}
+
v_\lambda^\star(x_{t_j},t_j,c)\Delta t_j.
\end{aligned}
\label{eq:app-ode-extrapolated-mean}
\end{equation}

For the deterministic transition-matching objective, we directly regress the student transition mean toward $\mu_\lambda^\star$. Since
$\mu_\theta^{(j)}-\mu_\lambda^\star
=
\Delta t_j(v_\theta-v_\lambda^\star)$, the ODE objective becomes
\begin{equation}
\small
\begin{aligned}
\mathcal{L}^{\mathrm{ODE}}(\theta)
&=
\mathbb{E}_{c}
\left[
\sum_{j=0}^{N-1}
\frac{1}{2}
\left\|
\mu_\theta^{(j)}
-
\mathrm{sg}\!\left(\mu_\lambda^\star\right)
\right\|^2
\right]
\\
&=
\mathbb{E}_{c}
\left[
\sum_{j=0}^{N-1}
\frac{(\Delta t_j)^2}{2}
\left\|
v_\theta(x_{t_j},t_j,c)
-
\mathrm{sg}\!\left(
v_\lambda^\star(x_{t_j},t_j,c)
\right)
\right\|^2
\right].
\end{aligned}
\label{eq:app-ode-regression}
\end{equation}

The states $\{x_{t_j}\}_{j=0}^{N}$ are collected from the current student ODE rollout and treated as fixed during each regression update. Thus, the ODE formulation retains the closed-form extrapolated target while replacing stochastic transition matching with direct deterministic mean matching.

\subsection{Proof of Lemma~\ref{lem:teacher-reward-tilt}}
\label{app:proof-reward-tilt}

\begin{proof}
Define the reward-tilted distribution
\begin{equation}
\small
\begin{aligned}
q(x)
=
\frac{1}{Z_T}
p_{\mathrm{ref}}(x)
\exp\left(\frac{r(x)}{\beta}\right), \quad
Z_T=
\int
p_{\mathrm{ref}}(x)
\exp\left(\frac{r(x)}{\beta}\right)
dx.
\end{aligned}
\label{eq:app-reward-tilted-distribution}
\end{equation}

Because $r$ is bounded and $p_{\mathrm{ref}}$ is a probability distribution, $0<Z_T<\infty$, and hence $q$ is a well-defined probability distribution. Moreover, $q$ and $p_{\mathrm{ref}}$ have the same support.

For any probability distribution $p$ satisfying $p\ll p_{\mathrm{ref}}$, the KL-regularized reward objective can be rewritten as
\begin{equation}
\small
\begin{aligned}
\mathbb{E}_{x\sim p}[r(x)]
-
\beta\,
\mathrm{KL}
\left(
p\,\middle\|\,p_{\mathrm{ref}}
\right)
&=
\int
p(x)
\left[
r(x)
-
\beta
\log
\frac{p(x)}{p_{\mathrm{ref}}(x)}
\right]
dx
\\
&=
-\beta
\int
p(x)
\log
\frac{
p(x)
}{
p_{\mathrm{ref}}(x)
\exp\left(r(x)/\beta\right)
}
dx
\\
&=
-\beta
\int
p(x)
\log
\frac{p(x)}{Z_T q(x)}
dx
\\
&=
-\beta
\int
p(x)
\log
\frac{p(x)}{q(x)}
dx
+
\beta\log Z_T
\\
&=
\beta\log Z_T
-
\beta\,
\mathrm{KL}
\left(
p\,\middle\|\,q
\right).
\end{aligned}
\label{eq:app-reward-objective-kl-decomposition}
\end{equation}

Since $\mathrm{KL}(p\|q)\geq0$, with equality if and only if $p=q$ almost everywhere, the objective is uniquely maximized by $p_T=q$. Therefore,
\begin{equation}
\small
p_T(x)
=
\frac{1}{Z_T}
p_{\mathrm{ref}}(x)
\exp\left(\frac{r(x)}{\beta}\right),
\end{equation}
which proves the result.
\end{proof}

\subsection{Proof of Proposition~\ref{prop:reward-monotonicity}}
\label{app:proof-reward-monotonicity}

\subsubsection{Optimal Extrapolated Distribution}
\begin{proof}
For compactness, define the teacher-reference log-density ratio $g(x)=\log\frac{p_T(x)}{p_{\mathrm{ref}}(x)}$ on the support of $p_{\mathrm{ref}}$. The distributional objective can then be written as
\begin{equation}
\small
\begin{aligned}
\mathcal{J}_\lambda(p)
&=
\lambda
\int p(x)g(x)\,dx
-
\int
p(x)
\log
\frac{p(x)}{p_{\mathrm{ref}}(x)}
dx
\\
&=
-\int
p(x)
\log
\frac{p(x)}
{
p_{\mathrm{ref}}(x)
\exp\left(\lambda g(x)\right)
}
dx.
\end{aligned}
\label{eq:app-extrapolation-objective-density-ratio}
\end{equation}

Define
\begin{equation}
\small
\begin{aligned}
Z_\lambda
&=
\int
p_{\mathrm{ref}}(x)
\exp\left(\lambda g(x)\right)
dx=
\int
p_{\mathrm{ref}}(x)
\left(
\frac{p_T(x)}{p_{\mathrm{ref}}(x)}
\right)^\lambda
dx,
\\
p_\lambda(x)
&=
\frac{1}{Z_\lambda}
p_{\mathrm{ref}}(x)
\exp\left(\lambda g(x)\right)=
\frac{1}{Z_\lambda}
p_T(x)^\lambda
p_{\mathrm{ref}}(x)^{1-\lambda}.
\end{aligned}
\label{eq:app-extrapolated-distribution}
\end{equation}

By assumption, $Z_\lambda$ is finite, so $p_\lambda$ is well defined.
Substituting $p_{\mathrm{ref}}(x)\exp(\lambda g(x))=Z_\lambda p_\lambda(x)$
into Eq.~(\ref{eq:app-extrapolation-objective-density-ratio}) yields
\begin{equation}
\small
\begin{aligned}
\mathcal{J}_\lambda(p)
&=
-\int
p(x)
\log
\frac{p(x)}
{
Z_\lambda p_\lambda(x)
}
dx
=
-\int
p(x)
\log
\frac{p(x)}{p_\lambda(x)}
dx
+
\log Z_\lambda
=
\log Z_\lambda
-
\mathrm{KL}
\left(
p\,\middle\|\,p_\lambda
\right).
\end{aligned}
\label{eq:app-extrapolation-objective-kl-decomposition}
\end{equation}
Since the KL divergence is nonnegative and vanishes if and only if
$p=p_\lambda$ almost everywhere, $p_\lambda$ is the unique optimizer
of $\mathcal{J}_\lambda(p)$.
\end{proof}

\subsubsection{Monotonicity of the Expected Reward}

\begin{proof}
Under the reward-tilt assumption in Lemma~1, the teacher-reference log-density ratio satisfies
\begin{equation}
\small
\begin{aligned}
g(x)=
\log
\frac{p_T(x)}{p_{\mathrm{ref}}(x)}=
\frac{r(x)}{\beta}-\log Z_T.
\end{aligned}
\label{eq:app-log-ratio-as-reward}
\end{equation}

Substituting Eq.~(\ref{eq:app-log-ratio-as-reward}) into
Eq.~(\ref{eq:app-extrapolated-distribution}) gives
\begin{equation}
\small
\begin{aligned}
p_\lambda(x)
&=
\frac{
p_{\mathrm{ref}}(x)
\exp\left(\lambda g(x)\right)
}{
\int
p_{\mathrm{ref}}(x')
\exp\left(\lambda g(x')\right)
dx'
}
\\
&=
\frac{
p_{\mathrm{ref}}(x)
\exp\left(\lambda r(x)/\beta\right)
\exp\left(-\lambda\log Z_T\right)
}{
\int
p_{\mathrm{ref}}(x')
\exp\left(\lambda r(x')/\beta\right)
\exp\left(-\lambda\log Z_T\right)
dx'
}
\\
&=
\frac{
p_{\mathrm{ref}}(x)
\exp\left(\lambda r(x)/\beta\right)
}{
\widetilde{Z}_\lambda
},
\\
\widetilde{Z}_\lambda
&=
\int
p_{\mathrm{ref}}(x)
\exp\left(\lambda r(x)/\beta\right)
dx.
\end{aligned}
\label{eq:app-extrapolated-reward-tilt}
\end{equation}

Because $r$ is bounded, differentiation under the integral sign is valid. Differentiating the log-density of $p_\lambda$ with respect to $\lambda$ yields
\begin{equation}
\small
\begin{aligned}
\frac{\partial}{\partial\lambda}
\log p_\lambda(x)
&=
\frac{r(x)}{\beta}
-
\frac{\partial}{\partial\lambda}
\log\widetilde{Z}_\lambda
\\
&=
\frac{r(x)}{\beta}
-
\frac{1}{\widetilde{Z}_\lambda}
\int
p_{\mathrm{ref}}(x')
\exp\left(\frac{\lambda r(x')}{\beta}\right)
\frac{r(x')}{\beta}
dx'
\\
&=
\frac{1}{\beta}
\left(
r(x)
-
\mathbb{E}_{x'\sim p_\lambda}[r(x')]
\right)
\\
&=
\frac{1}{\beta}
\left(
r(x)-J(\lambda)
\right).
\end{aligned}
\label{eq:app-log-density-derivative}
\end{equation}

Therefore,
\begin{equation}
\small
\begin{aligned}
\frac{dJ(\lambda)}{d\lambda}
&=
\frac{d}{d\lambda}
\int
p_\lambda(x)r(x)\,dx
\\
&=
\int
r(x)
\frac{\partial p_\lambda(x)}{\partial\lambda}
dx
\\
&=
\int
r(x)p_\lambda(x)
\frac{\partial\log p_\lambda(x)}{\partial\lambda}
dx
\\
&=
\frac{1}{\beta}
\int
p_\lambda(x)r(x)
\left(
r(x)-J(\lambda)
\right)
dx
\\
&=
\frac{1}{\beta}
\left(
\mathbb{E}_{x\sim p_\lambda}[r(x)^2]
-
\mathbb{E}_{x\sim p_\lambda}[r(x)]^2
\right)
\\
&=
\frac{1}{\beta}
\mathrm{Var}_{x\sim p_\lambda}[r(x)]
\geq0.
\end{aligned}
\label{eq:app-reward-monotonicity-proof}
\end{equation}

The inequality is strict exactly when $\mathrm{Var}_{p_\lambda}[r(x)]>0$, or equivalently, when $r$ is not constant $p_\lambda$-almost surely. This proves the claimed monotonicity.
\end{proof}

\subsection{Proof of Proposition~\ref{thm:reference-amplification}}
\label{app:proof-reference-amplification}

Recall the reward-aligned distribution family
\begin{equation}
\small
\begin{aligned}
p_\rho(x)=
\frac{1}{Z_\rho}
p_0(x)
\exp\left(\frac{\rho r(x)}{\beta}\right),\quad
Z_\rho=
\int
p_0(x)
\exp\left(\frac{\rho r(x)}{\beta}\right)dx.
\end{aligned}
\label{eq:app-reward-alignment-family}
\end{equation}

In particular, the teacher is given by
\begin{equation}
\small
p_T(x)
=
p_1(x)
=
\frac{1}{Z_1}
p_0(x)
\exp\left(\frac{r(x)}{\beta}\right).
\label{eq:app-teacher-reward-alignment-family}
\end{equation}

Because $r$ is bounded, all normalizing constants appearing below are
finite and strictly positive.

\subsubsection{Effective Reward-Tilt Coefficient}
\begin{proof}
Using $p_T=p_1$ and $p_{\mathrm{ref}}=p_\rho$, the unnormalized
extrapolated distribution can be expanded as
\begin{equation}
\small
\begin{aligned}
p_T(x)^\lambda p_\rho(x)^{1-\lambda}
&=
\left[
\frac{1}{Z_1}
p_0(x)
\exp\left(\frac{r(x)}{\beta}\right)
\right]^\lambda
\left[
\frac{1}{Z_\rho}
p_0(x)
\exp\left(\frac{\rho r(x)}{\beta}\right)
\right]^{1-\lambda}
\\
&=
Z_1^{-\lambda}
Z_\rho^{\lambda-1}
p_0(x)^{\lambda+1-\lambda}
\exp\left(
\frac{
\lambda r(x)+(1-\lambda)\rho r(x)
}{\beta}
\right)
\\
&=
Z_1^{-\lambda}
Z_\rho^{\lambda-1}
p_0(x)
\exp\left(
\frac{
[\lambda+(1-\lambda)\rho]r(x)
}{\beta}
\right)
\\
&=
Z_1^{-\lambda}
Z_\rho^{\lambda-1}
p_0(x)
\exp\left(
\frac{
[1+(\lambda-1)(1-\rho)]r(x)
}{\beta}
\right).
\end{aligned}
\label{eq:app-effective-reward-tilt-derivation}
\end{equation}

The factor $Z_1^{-\lambda}Z_\rho^{\lambda-1}$ is independent of $x$ and is therefore absorbed into the normalizing constant. Defining
\begin{equation}
\small
\begin{aligned}
\rho_{\mathrm{eff}}(\lambda,\rho)
=
1+(\lambda-1)(1-\rho), \quad
Z_{\lambda,\rho}=
\int
p_0(x)
\exp\left(
\frac{
\rho_{\mathrm{eff}}(\lambda,\rho)r(x)
}{\beta}
\right)dx,
\end{aligned}
\label{eq:app-effective-alignment-coefficient}
\end{equation}
the normalized extrapolated distribution is
\begin{equation}
\small
p_{\lambda,\rho}(x)
=
\frac{1}{Z_{\lambda,\rho}}
p_0(x)
\exp\left(
\frac{
\rho_{\mathrm{eff}}(\lambda,\rho)r(x)
}{\beta}
\right).
\label{eq:app-effective-alignment-distribution}
\end{equation}

Thus, $p_{\lambda,\rho}$ remains in the same reward-aligned exponential family, with effective alignment coefficient $\rho_{\mathrm{eff}}(\lambda,\rho)$.

Now consider two references satisfying $\rho_d<\rho_n<1$ and fix $\lambda>1$. Since $\lambda-1>0$, we obtain
\begin{equation}
\small
\begin{gathered}
\rho_{\mathrm{eff}}(\lambda,\rho_d)
-
\rho_{\mathrm{eff}}(\lambda,\rho_n)
=
(\lambda-1)
\left[
(1-\rho_d)-(1-\rho_n)
\right]=
(\lambda-1)(\rho_n-\rho_d)>0,
\\
\rho_{\mathrm{eff}}(\lambda,\rho_n)-1
=
(\lambda-1)(1-\rho_n)>0.
\end{gathered}
\label{eq:app-effective-alignment-ordering}
\end{equation}

Consequently,
\begin{equation}
\small
\rho_{\mathrm{eff}}(\lambda,\rho_d)
>
\rho_{\mathrm{eff}}(\lambda,\rho_n)
>
1,
\end{equation}
which proves the first part of the proposition.
\end{proof}

\subsubsection{Local Reward Sensitivity}

\begin{proof}
For a fixed reference parameter $\rho$, define
\begin{equation}
\small
\begin{gathered}
J_\rho(\lambda)=
\mathbb{E}_{x\sim p_{\lambda,\rho}}[r(x)],
\\
p_{\lambda,\rho}(x)
=
\frac{1}{Z_{\lambda,\rho}}
p_0(x)
\exp\left(
\frac{
\rho_{\mathrm{eff}}(\lambda,\rho)r(x)
}{\beta}
\right),\quad
\rho_{\mathrm{eff}}(\lambda,\rho)
=
1+(\lambda-1)(1-\rho).
\end{gathered}
\label{eq:app-local-sensitivity-definitions}
\end{equation}

The effective alignment coefficient satisfies
\begin{equation}
\small
\frac{\partial
\rho_{\mathrm{eff}}(\lambda,\rho)}
{\partial\lambda}
=
1-\rho.
\label{eq:app-effective-coefficient-derivative}
\end{equation}

Because $r$ is bounded, differentiation under the integral sign is valid. Differentiating the log normalizer gives
\begin{equation}
\small
\begin{aligned}
\frac{\partial}{\partial\lambda}
\log Z_{\lambda,\rho}
&=
\frac{1}{Z_{\lambda,\rho}}
\int
p_0(x)
\exp\left(
\frac{
\rho_{\mathrm{eff}}(\lambda,\rho)r(x)
}{\beta}
\right)
\frac{r(x)}{\beta}
\frac{
\partial \rho_{\mathrm{eff}}(\lambda,\rho)
}{
\partial\lambda
}
dx
\\
&=
\frac{1-\rho}{\beta}
\int
p_{\lambda,\rho}(x)r(x)\,dx
\\
&=
\frac{1-\rho}{\beta}
J_\rho(\lambda).
\end{aligned}
\label{eq:app-reference-log-normalizer-derivative}
\end{equation}

It follows that the derivative of the log density is
\begin{equation}
\small
\begin{aligned}
\frac{\partial}{\partial\lambda}
\log p_{\lambda,\rho}(x)
&=
\frac{r(x)}{\beta}
\frac{\partial
\rho_{\mathrm{eff}}(\lambda,\rho)}
{\partial\lambda}
-
\frac{\partial}{\partial\lambda}
\log Z_{\lambda,\rho}
\\
&=
\frac{1-\rho}{\beta}
r(x)
-
\frac{1-\rho}{\beta}
J_\rho(\lambda)
\\
&=
\frac{1-\rho}{\beta}
\left[
r(x)-J_\rho(\lambda)
\right].
\end{aligned}
\label{eq:app-reference-log-density-derivative}
\end{equation}

The reward sensitivity with respect to $\lambda$ is therefore
\begin{equation}
\small
\begin{aligned}
\frac{dJ_\rho(\lambda)}{d\lambda}
&=
\frac{d}{d\lambda}
\int
p_{\lambda,\rho}(x)r(x)\,dx
\\
&=
\int
r(x)
\frac{\partial p_{\lambda,\rho}(x)}
{\partial\lambda}
dx
\\
&=
\int
p_{\lambda,\rho}(x)r(x)
\frac{\partial}{\partial\lambda}
\log p_{\lambda,\rho}(x)
dx
\\
&=
\frac{1-\rho}{\beta}
\int
p_{\lambda,\rho}(x)r(x)
\left[
r(x)-J_\rho(\lambda)
\right]dx
\\
&=
\frac{1-\rho}{\beta}
\left(
\mathbb{E}_{p_{\lambda,\rho}}[r(x)^2]
-
\mathbb{E}_{p_{\lambda,\rho}}[r(x)]^2
\right)
\\
&=
\frac{1-\rho}{\beta}
\mathrm{Var}_{x\sim p_{\lambda,\rho}}[r(x)].
\end{aligned}
\label{eq:app-reference-reward-sensitivity}
\end{equation}

At $\lambda=1$, $p_{1,\rho}(x)=\frac{1}{Z_1} p_0(x)\exp\left(\frac{r(x)}{\beta}\right)=p_T(x)$. The effective alignment coefficient is independent of the reference. Hence, all reference choices recover the same teacher distribution at $\lambda=1$, and Eq.~(\ref{eq:app-reference-reward-sensitivity}) reduces to
\begin{equation}
\small
\left.
\frac{dJ_\rho(\lambda)}{d\lambda}
\right|_{\lambda=1}
=
\frac{1-\rho}{\beta}
\mathrm{Var}_{x\sim p_T}[r(x)].
\label{eq:app-local-reference-sensitivity}
\end{equation}

Because $r$ is nonconstant under $p_T$, $\mathrm{Var}_{p_T}[r(x)]>0$. For $\rho_d<\rho_n<1$, both local derivatives are therefore strictly positive, and their ratio is
\begin{equation}
\small
\begin{aligned}
\left.
\frac{
dJ_{\rho_d}(\lambda)/d\lambda
}{
dJ_{\rho_n}(\lambda)/d\lambda
}
\right|_{\lambda=1}
&=
\frac{
\frac{1-\rho_d}{\beta}
\mathrm{Var}_{p_T}[r(x)]
}{
\frac{1-\rho_n}{\beta}
\mathrm{Var}_{p_T}[r(x)]
}
=
\frac{1-\rho_d}{1-\rho_n}>1,
\end{aligned}
\label{eq:app-reference-amplification-ratio-proof}
\end{equation}
where the final inequality follows from
$\rho_d<\rho_n<1$. This proves the local reward-sensitivity claim.
\end{proof}

\section{Experimental Details}
\label{app:experimental-details}
\subsection{Training Configuration}
\label{app:training-configuration}

All experiments are conducted on a single node with eight NVIDIA A100
GPUs. We follow the experimental configurations of DiffusionOPD~\citep{li2026diffusionopd}. All experiments use \textbf{\texttt{SD3.5-M}} at a resolution of $512\times512$. We fine-tune LoRA adapters with rank $r=32$ and scaling factor $\alpha=64$, while keeping the teacher and reference models frozen. We optimize the LoRA parameters using AdamW with a learning rate of $3\times10^{-4}$, $\beta_1=0.9$, $\beta_2=0.999$, weight decay $10^{-4}$, and $\epsilon=10^{-8}$. Student trajectories are collected using a 10-step first-order ODE sampler, and evaluation uses 40 sampling steps. Unless otherwise specified, the remaining optimization, sampling, and task-specific configurations follow the corresponding baseline settings.

\subsection{Teachers and Baselines}
\label{app:models-teachers}
All methods are initialized from the same \textbf{\texttt{SD3.5-M}}~\citep{esser2024sd} checkpoint. For OPD-based methods, the task-specific teachers remain frozen throughout training and are queried only at states
visited by the current student. Unless otherwise specified, we follow the teacher construction and baseline configurations adopted by DiffusionOPD~\citep{li2026diffusionopd}. We summarize the relevant
details below.

\paragraph{Single-task RL teachers.}

We use three independently trained teachers, each specialized for one of the target capabilities: compositional prompt following, text rendering, and aesthetic quality. These are the same task-specific teachers used by DiffusionOPD~\citep{li2026diffusionopd}.

The \textbf{\texttt{GenEval teacher}} is trained using DiffusionNFT~\citep{zheng2025diffusionnft} to optimize the GenEval compositional reward. GenEval evaluates compositional prompt following through rule-based correctness signals covering object identity, counting, color, spatial relations, and attribute binding. The \textbf{\texttt{OCR teacher}} is trained using GRPO-Guard~\citep{wang2026grpo} to optimize the text-rendering reward. The \textbf{\texttt{Aesthetics teacher}} is also trained using GRPO-Guard and optimizes the equally weighted reward of PickScore, ClipScore and HPSv2.1.

Each teacher is trained only on its corresponding prompt distribution and reward objective. Consequently, the reported \textbf{\texttt{Teacher}} results in the single-teacher experiment combine the in-domain scores of three different specialized models and do not represent a single multi-task teacher.

\paragraph{Multi-task RL baselines.}

We compare against direct multi-task optimization with Flow-GRPO, GRPO-Guard, DiffusionNFT, and CascadeNFT. These baselines receive the same task-specific prompt datasets and reward definitions used to construct the teachers, but optimize a single shared model rather than training separate specialists.

For \textbf{\texttt{Flow-GRPO}}~\citep{liu2026flow}, the reverse-time flow sampler is treated as a stochastic policy with Gaussian transition kernels, and the model is optimized using group-relative policy updates. In the multi-task setting, training alternates among the GenEval, OCR, and aesthetics prompt datasets, applying the reward associated with the sampled task. \textbf{\texttt{GRPO-Guard}}~\citep{wang2026grpo} follows the same alternating multi-task organization while incorporating its guarded optimization mechanism to improve training stability under potentially exploitable reward signals.

\textbf{\texttt{DiffusionNFT}}~\citep{zheng2025diffusionnft} performs direct reward-based fine-tuning through the differentiable generation process. Its multi-task variant alternates among the three task datasets and applies the corresponding reward objective at each update. We use the same task curriculum across jointly trained multi-task baselines.

The original multi-task formulation of DiffusionNFT adopts sequential optimization. We denote this variant as \textbf{\texttt{CascadeNFT}}. The model is first fine-tuned on one task and then successively adapted to the remaining tasks using their respective objectives.

\paragraph{OPD baselines.}

To enable a fair comparison, all OPD methods use the same student initialization, task-specific teachers, prompt datasets, and task-sampling schedule. 

\textbf{\texttt{Flow-OPD}}~\citep{fang2026flowopd} regresses the student velocity toward the selected teacher velocity on student-generated states. It additionally employs Manifold Anchor Regularization to constrain the student toward a high-quality visual manifold.

\textbf{\texttt{DiffusionOPD}}~\citep{li2026diffusionopd} minimizes the stepwise reverse KL divergence from the student transition to the corresponding teacher transition along student-generated trajectories. 

\subsection{Rewards and Metrics}
\label{app:reward-metrics}

We evaluate the models using the following metrics:
\begin{itemize}
    \item \textbf{GenEval}~\citep{ghosh2023geneval} measures compositional prompt following, including object identity, counting,
    color, spatial relations, and attribute binding.

    \item \textbf{OCR} measures the accuracy of text rendered in generated images using the prompt split released with Flow-GRPO~\citep{liu2026flow}. Here, we perform text recognition using \texttt{PaddleOCR=3.7.0} with \texttt{PaddlePaddle-GPU=3.3.1}.

    \item \textbf{PickScore}~\citep{kirstain2023pick} is a preference-based metric trained to assess human preferences for text-to-image generations.

    \item \textbf{ClipScore}~\citep{hessel2021clipscore} measures the semantic correspondence between a generated image and its text prompt using CLIP representations.

    \item \textbf{HPSv2.1}~\citep{wu2023hps} evaluates text-image alignment and perceptual quality using a model trained on human preference data.

    \item \textbf{Aesthetic Score}~\citep{schuhmann2022aesthetics} estimates the perceptual quality of generated images. We use it as an out-of-domain metric because it is not directly included in the training reward.

    \item \textbf{ImageReward}~\citep{xu2023imagereward} evaluates human preference and text-image alignment. It is also used only for out-of-domain evaluation.
\end{itemize}

\paragraph{Average score.}

Because the metrics have different numerical ranges, we compute the average score by independently applying min-max normalization to each metric. Let $s_{m,k}$ denote the score of method $m$ on metric $k$, and let $\mathcal{M}$ denote the set of methods included in the corresponding table. We compute
\begin{equation}
\begin{aligned}
\widetilde{s}_{m,k}
&=
\frac{s_{m,k}-s_k^{\min}}
{s_k^{\max}-s_k^{\min}}, \quad
\operatorname{Avg}(m)
=
\frac{1}{K}
\sum_{k=1}^{K}
\widetilde{s}_{m,k},
\end{aligned}
\label{eq:normalized-average}
\end{equation}
where $K$ is the number of metrics included in the table. All metrics are positively oriented, such that a larger value indicates better performance.

\subsection{Degraded-Reference Construction}
\label{app:degraded-reference-construction}

The reference is the pretrained \textbf{\texttt{SD3.5-M}} backbone without LoRA adapters and with classifier-free guidance disabled. We keep the sampler, noise schedule, and guidance scale fixed across all degradation variants and modify only the reference velocity or backbone weights.

\paragraph{Velocity Quantization.}
We apply per-channel quantization directly to the reference velocity output using either 4-bit or 8-bit precision. This operation introduces deterministic quantization error without modifying the backbone parameters. We use 8-bit velocity quantization in the main experiments, as it achieves the best average performance in our ablation.

\paragraph{Weight Quantization.}
We apply per-output-channel quantization to all multi-dimensional weight tensors in the frozen reference backbone, while excluding normalization parameters and biases. We evaluate both 4-bit and 8-bit variants. Unlike velocity quantization, this degradation affects intermediate representations throughout the denoising network.

\paragraph{Gaussian Velocity Perturbation.}
We add zero-mean Gaussian noise to the reference velocity output, with the noise scale set relative to the empirical standard deviation of each channel. We evaluate relative noise levels of $0.01$ and $0.1$. This construction provides a stochastic alternative to the deterministic perturbations introduced by quantization.

\paragraph{Performance of the Degraded References}
\label{app:degraded-reference-rewards}

\begin{table*}[!h]
\centering
\setlength{\tabcolsep}{2mm}
\renewcommand{\arraystretch}{1.0}
\scriptsize
{
\begin{tabular}{lccccccc}
\toprule
\textbf{Reference} &
\textbf{GenEval} &
\textbf{OCR} &
\textbf{PickScore} &
\textbf{ClipScore} &
\textbf{HPSv2.1} &
\textbf{Aesthetic} &
\textbf{ImageReward} \\
\midrule
Original reference & 0.2529 & 0.1377 & 20.519 & 0.2384 & 0.2052 & 5.1614 & -0.5471 \\
\midrule
4-bit velocity quantization & 0.1939$\downarrow$ & 0.1292$\downarrow$ & 20.540$\uparrow$ & 0.2345$\downarrow$ & 0.2118$\uparrow$ & 5.1908$\uparrow$ & -0.5256$\uparrow$ \\
\rowcolor{lightbluecell}
\textbf{8-bit velocity quantization} & 0.2501$\downarrow$ & 0.1327$\downarrow$ & 20.550$\uparrow$ & 0.2383$\downarrow$ & 0.2064$\uparrow$ & 5.1162$\downarrow$ & -0.5457$\uparrow$ \\
4-bit weight quantization & 0.0081$\downarrow$ & 0.0112$\downarrow$ & 18.312$\downarrow$ & 0.1644$\downarrow$ & 0.0964$\downarrow$ & 4.0362$\downarrow$ & -2.2245$\downarrow$\\
8-bit weight quantization & 0.2371$\downarrow$ & 0.1295$\downarrow$ & 20.519$=$ & 0.2383$\downarrow$ & 0.2049$\downarrow$ & 5.1252$\downarrow$ & -0.5881$\downarrow$ \\
Gaussian perturbation ($\sigma=0.01$) & 0.2309$\downarrow$ & 0.1357$\downarrow$ & 20.550$\uparrow$ & 0.2397$\uparrow$ & 0.2059$\uparrow$ & 5.1288$\downarrow$ & -0.5689$\downarrow$ \\
Gaussian perturbation ($\sigma=0.1$) & 0.2514$\downarrow$ & 0.1255$\downarrow$ & 20.574$\uparrow$ & 0.2394$\uparrow$ & 0.2072$\uparrow$ & 5.1189$\downarrow$ & -0.5343$\uparrow$ \\
\bottomrule
\end{tabular}
}
\caption{Performance of the original and degraded references across all evaluated reward metrics.}
\label{tab:degraded-reference-performance}
\end{table*}

Table~\ref{tab:degraded-reference-performance} reports the reward metrics of the original and degraded references. All degradation variants reduce GenEval and OCR performance, indicating weaker alignment with these two rule-based objectives. The changes in the preference-based metrics are less consistent. Several variants slightly improve PickScore, ClipScore, HPSv2.1, or ImageReward while reducing other scores. Reference degradation therefore does not uniformly lower every reward. Instead, it changes the reward profile and increases the contrast with the task-specific teachers.

The extent of these changes depends on the degradation mechanism. 4-bit weight quantization substantially reduces every metric and may disrupt the generative structure of the reference. 8-bit velocity quantization produces a milder change. It reduces GenEval, OCR ClipScore and Aesthetic while preserving the remaining metrics. This balance provides a meaningful teacher-reference contrast without the broad performance degradation caused by aggressive weight quantization and motivates its use in our main experiments.

Gaussian perturbations inject unstructured random noise into the velocity field and may disrupt the underlying generation structure. Quantization instead introduces bounded and deterministic discretization errors that better preserve the structure of the reference predictions.

This observation also connects to the theoretical assumption underlying Proposition~\ref{thm:reference-amplification}. The reward-alignment model posits that teacher and reference distributions belong to the same parametric family $p_\rho$. This assumption holds when the reference retains the generative structure of the teacher. Mild degradation such as 8-bit velocity quantization preserves this structure and keeps the reference within the assumed family, enabling the reward-monotonicity result to hold in practice. Aggressive degradation such as 4-bit weight quantization, by contrast, may push the reference outside this family by disrupting the model's internal representations and reducing the effective extrapolation strength.

\section{Additional Experimental Results}
\label{app:additional-results}

\subsection{Complete Single-Teacher Results}
\label{app:full-single-teacher-results}

Table~\ref{tab:single_results} reports the complete results for the three independently distilled students. Each student is optimized using one task-specific teacher. 

\begin{table*}[!h]
\centering
\setlength{\tabcolsep}{1mm}
{
\scriptsize
\renewcommand{\arraystretch}{0.95}
\begin{tabular}{c l|ccccc|cc}
\toprule
& & \multicolumn{5}{c|}{\textbf{In-Domain Reward}} 
  & \multicolumn{2}{c}{\textbf{OOD Reward}} \\
\cmidrule(lr){3-7} \cmidrule(lr){8-9}
& \multirow{2}{*}[2.8ex]{\textbf{Model}}
& \textbf{GenEval} 
& \textbf{OCR} 
& \textbf{PickScore} 
& \textbf{ClipScore} 
& \textbf{HPSv2.1} 
& \textbf{Aesthetic} 
& \textbf{ImgReward} \\
\midrule
\multirow{2}{*}{\rotatebox[origin=c]{90}{Base}}
& \textbf{\texttt{SD3.5-M$\,$(w/o CFG)}}
& 0.2529 & 0.1377 & 20.519 & 0.2384 & 0.2052 & 5.161 & -0.5471  \\
& \textbf{\texttt{SD3.5-M}}
& 0.6273 & 0.5079 & 22.331 & 0.2837 & 0.2795 & 5.396 & 0.8324\\
\midrule
% \multicolumn{7}{l}{\textit{Single-Task RL}} \\
\multirow{5}{*}{\rotatebox[origin=c]{90}{GenEval}}
& \textbf{\texttt{GenEval Teacher}}
& \cellcolor{lightgraycell}{0.9470} & 0.6286 & 20.084 & 0.2870 & 0.2644 & 5.246 & 0.8976 \\
\cmidrule(lr){2-9}
& \textbf{\texttt{Flow-OPD}}
& \cellcolor{lightgraycell}{0.9395$_{\textcolor{negred}{-.0075}}$} & 0.6077& 22.161 & 0.2881 & 0.2673& 5.270& 0.9507\\
& \textbf{\texttt{DiffusionOPD}}
& \cellcolor{lightgraycell}{0.9607$_{\textcolor{posgreen}{+.0137}}$} & 0.4641 & 22.126 & 0.2794 & 0.2447 & 5.169 & 0.6209\\
& \textbf{\texttt{Ours$\,$(w/o DeRef)}}
& \cellcolor{lightgraycell}{\underline{0.9668}$_{\textcolor{posgreen}{+.0198}}$} & 0.5399 & 22.113 & 0.2797& 0.2460& 5.163& 0.6312\\
& \textbf{\texttt{Ours}}
& \cellcolor{lightgraycell}{\textbf{0.9681}$_{\textcolor{posgreen}{+.0211}}$} & 0.4748 & 22.209& 0.2766& 0.2446& 5.179 & 0.5873\\
\midrule
\multirow{5}{*}{\rotatebox[origin=c]{90}{OCR}}
& \textbf{\texttt{OCR Teacher}}
& 0.6562 & \cellcolor{lightgraycell}{0.9239} & 22.225 & 0.2919 & 0.2720 & 5.266 & 0.8881 \\
\cmidrule(lr){2-9}
& \textbf{\texttt{Flow-OPD}}
& 0.6556 & \cellcolor{lightgraycell}{0.9279$_{\textcolor{posgreen}{+.0040}}$} & 22.227 & 0.2905 & 0.2719 & 5.267& 0.9029\\
& \textbf{\texttt{DiffusionOPD}}
& 0.6533 & \cellcolor{lightgraycell}{0.9246$_{\textcolor{posgreen}{+.0007}}$} & 22.199& 0.2905& 0.2709& 5.264& 0.8721\\
& \textbf{\texttt{Ours$\,$(w/o DeRef)}}
& 0.6610 & \cellcolor{lightgraycell}{\underline{0.9322}$_{\textcolor{posgreen}{+.0083}}$} & 22.162& 0.2906& 0.2719& 5.276& 0.8946\\
& \textbf{\texttt{Ours}}
& 0.6638& \cellcolor{lightgraycell}{\textbf{0.9364}$_{\textcolor{posgreen}{+.0125}}$} & 22.158& 0.2911& 0.2719 & 5.279&0.9007 \\
\midrule
\multirow{5}{*}{\rotatebox[origin=c]{90}{Aes.}}
& \textbf{\texttt{Aes Teacher}}
& 0.4935 & 0.5014 & \cellcolor{lightgraycell}{\underline{24.034}} & \cellcolor{lightgraycell}{\textbf{0.2963}} & \cellcolor{lightgraycell}{\underline{0.3460}} & 6.232 & 1.5071 \\
\cmidrule(lr){2-9}
& \textbf{\texttt{Flow-OPD}}
& 0.4758& 0.4913& \cellcolor{lightgraycell}{24.006$_{\textcolor{negred}{-.0280}}$} & \cellcolor{lightgraycell}{\underline{0.2957}$_{\textcolor{negred}{-.0006}}$} & \cellcolor{lightgraycell}{0.3445$_{\textcolor{negred}{-.0015}}$} & 6.216& 1.5027\\
& \textbf{\texttt{DiffusionOPD}}
& 0.4738& 0.4914 & \cellcolor{lightgraycell}{24.008$_{\textcolor{negred}{-.0260}}$} & \cellcolor{lightgraycell}{0.2955$_{\textcolor{negred}{-.0008}}$} & \cellcolor{lightgraycell}{0.3454$_{\textcolor{negred}{-.0006}}$} & 6.233& 1.5030\\
& \textbf{\texttt{Ours$\,$(w/o DeRef)}}
& 0.4818& 0.4972& \cellcolor{lightgraycell}{\underline{24.034}$_{\textcolor{posgreen}{+.0000}}$} & \cellcolor{lightgraycell}{0.2946$_{\textcolor{negred}{-.0017}}$} & \cellcolor{lightgraycell}{\underline{0.3492}$_{\textcolor{posgreen}{+.0032}}$} & \underline{6.297}& \underline{1.5184}\\
& \textbf{\texttt{Ours}}
& 0.4739 & 0.4730& \cellcolor{lightgraycell}{\textbf{24.037}$_{\textcolor{posgreen}{+.0030}}$} & \cellcolor{lightgraycell}{0.2955$_{\textcolor{negred}{-.0008}}$} & \cellcolor{lightgraycell}{\textbf{0.3497}$_{\textcolor{posgreen}{+.0037}}$} & \textbf{6.301}& \textbf{1.5199}\\
\bottomrule
\end{tabular}
}
\caption{Complete results of single-task OPD methods. \textbf{Bold}: best; \underline{Underline}: second best; \colorbox{lightgraycell}{Gray-colored}: In-Domain reward; \textbf{\texttt{DeRef}}: degraded reference. Subscripts show absolute changes from the corresponding teacher.}
\label{tab:single_results}
\end{table*}

For \textbf{\texttt{GenEval Teacher}} distillation, Our \textbf{\texttt{DreOPD}} improves the teacher score from $0.9470$ to $0.9681$ and achieves the best GenEval result among the compared methods. The degraded reference provides a further gain over the non-degraded variant from $0.9668$ to $0.9681$. This stronger specialization does not improve every secondary metric. In particular, OCR and several perceptual scores decrease relative to the GenEval teacher. The result indicates that extrapolation strengthens the target capability while retaining the trade-offs associated with single-task optimization.

For \textbf{\texttt{OCR Teacher}} distillation, Our \textbf{\texttt{DreOPD}} improves OCR from $0.9239$ to $0.9364$. It also slightly improves GenEval, Aesthetic Score, and ImageReward over the OCR teacher while keeping the remaining preference metrics close to their teacher values. Compared with the non-degraded variant, the degraded reference raises OCR from $0.9322$ to $0.9364$ and produces small gains on most secondary metrics. This setting shows the clearest improvement from reference degradation without a substantial reduction in the other evaluated capabilities.

For \textbf{\texttt{Aes Teacher}} distillation, Our \textbf{\texttt{DreOPD}} improves PickScore from $24.034$ to $24.037$ and HPSv2.1 from $0.3460$ to $0.3497$, while ClipScore decreases slightly from $0.2963$ to $0.2955$. The same student also improves the out-of-domain Aesthetic Score from $6.232$ to $6.301$ and ImageReward from $1.5071$ to $1.5199$. The degraded reference improves all five perceptual metrics over the non-degraded variant, although GenEval and OCR decrease. 

Across all three settings, \textbf{\texttt{DreOPD}} achieves the largest improvement on the task-specific metrics. Some off-task metrics decrease as the student becomes more specialized, although these changes are generally moderate when considered across the full set of metrics. The overall results therefore indicate a favorable balance between improving the target capability and retaining the remaining capabilities.

\subsection{Comparison with Training-Free Model Merging}
\label{app:expo-results}
We further compare \textbf{\texttt{DreOPD}} with training-free model merging baselines. Given three task-specific teachers, we evaluate standard weight averaging, which directly merges their parameters without additional training. We also consider ExPO weight extrapolation~\citep{zheng2025expo}, where the averaged teacher model is extrapolated with $\alpha \in \{0.25, 0.5, 0.75, 1.0\}$ following the model extrapolation formulation. These training-free methods are simple and efficient, but they combine teachers only in parameter space and do not optimize the merged model on generation trajectories. As a result, weight averaging may dilute specialized capabilities, while weight extrapolation may move along directions that are not aligned with generation quality. In contrast, \textbf{\texttt{DreOPD}} performs on-policy distillation with extrapolative velocity targets. As shown in Table~\ref{tab:expo_results}, this leads to stronger and more balanced performance across evaluation metrics than training-free merging baselines.

\begin{table*}[!h]
\centering
\setlength{\tabcolsep}{1.3mm}
{
\small
\renewcommand{\arraystretch}{0.95}
\begin{tabular}{l|ccccc|cc}
\toprule
\textbf{Model} & \textbf{GenEval} & \textbf{OCR} & \textbf{PickScore} & \textbf{ClipScore} & \textbf{HPSv2.1} & \textbf{Aesthetic} & \textbf{ImgReward}\\
\midrule
\textbf{\texttt{SD3.5-M}}
& 0.6273 & 0.5079 & 22.331 & 0.2837 & 0.2795 & 5.396 & 0.8324 \\
\midrule
\multicolumn{7}{l}{\textit{Teachers}} \\
\textbf{\texttt{GenEval Teacher}}
& \cellcolor{lightgraycell}{0.9470} & 0.6286 & 20.084 & 0.2870 & 0.2644 & 5.246 & 0.8976\\
\textbf{\texttt{OCR Teacher}}
& 0.6562 & \cellcolor{lightgraycell}{0.9239} & 22.225 & 0.2919 & 0.2720 & 5.266 & 0.8881\\
\textbf{\texttt{Aes Teacher}}
& 0.4935 & 0.5014 & \cellcolor{lightgraycell}{\underline{24.034}} & \cellcolor{lightgraycell}{\underline{0.2963}} & \cellcolor{lightgraycell}{\underline{0.3460}} & 6.232 & 1.5071\\
\midrule
\multicolumn{7}{l}{\textit{Model Merge}} \\
\textbf{\texttt{Weight Average}}
& 0.8019 & 0.6952 & 22.761 & 0.2915 & 0.2965 & 5.439 & 1.1509 \\
\textbf{\texttt{ExPO $\alpha=0.25$}}
& 0.8235 & 0.7304 & 22.805 & 0.2923 & 0.2983 & 5.440 & 1.2034 \\
\textbf{\texttt{ExPO $\alpha=0.5$}}
& 0.8465 & 0.7670 & 22.828 & 0.2941 & 0.2999 & 5.449 & 1.2379 \\
\textbf{\texttt{ExPO $\alpha=0.75$}}
& 0.8569 & 0.7901 & 22.850 & 0.2951 & 0.3006 & 5.439 & 1.2654 \\
\textbf{\texttt{ExPO $\alpha=1.0$}}
& 0.8685 & 0.7994 & 22.856 & 0.2958 & 0.3014 & 5.443 & 1.2859 \\
\midrule
\rowcolor{lightbluecell}
\textbf{\texttt{Ours$\,$(w/o DeRef)}}
& \underline{0.9668} & 0.9281 & 24.032 & 0.2956 & \textbf{0.3487} & \underline{6.239} & \underline{1.5172} \\
\rowcolor{lightbluecell}
\textbf{\texttt{Ours}}
& \textbf{0.9681} & \underline{0.9362} & \textbf{24.035} & 0.2959 & \textbf{0.3487} & \textbf{6.292} & \textbf{1.5245}\\
\bottomrule
\end{tabular}
}
\caption{Comparison with training-free model merging methods using three task-specific teachers. \textbf{Bold}: best; \underline{Underline}: second best; \textbf{\texttt{DeRef}}: degraded reference; \colorbox{lightgraycell}{Gray-colored}: In-Domain reward; \colorbox{lightbluecell}{Blue-colored}: Ours.}
\label{tab:expo_results}
\end{table*}

\section{Prompts for Text-to-Image Generation}
Below we list the prompts for text-to-image generation in Figure~\ref{fig:teaser}.
\begin{itemize}
    \item A gray tabby cat sitting on a matte concrete floor against a plain pale gray studio wall.
    \item A single ripe strawberry hanging from its plant, soft cloudy daylight, plain softly blurred dark green leaf background, individual seeds and glossy red skin clearly resolved, tiny water droplets on the surface, 100mm macro, realistic botanical photograph.
    \item A single wooden pier extending into a calm lake at dawn, glassy water, mist over the surface, realistic landscape photograph.
    \item A reading-nook flat-lay on a plain matte pale-gray tabletop: an open hardcover book in the center, a folded charcoal-gray wool blanket in the upper left, a matte-white porcelain cup of tea in the upper right, and a small pair of round reading glasses in the lower center, soft cool north-window light, uncluttered pale background, top-down 50mm f/4, realistic photograph.
    \item A bowl of noodles on a wooden table.
    \item An anime girl on the seaside bathed in morning light.
    \item A deer standing in a foggy forest.
    \item A tea ceremony still-life on a plain matte pale-gray slate tray: a small cast-iron teapot in the center, a small ceramic bowl of loose green tea leaves in the upper left, three tiny porcelain tea cups arranged in an arc below the teapot, and a bamboo whisk in the upper right, soft cool diffused daylight, uncluttered pale background, top-down 50mm f/4, realistic photograph.
    \item A great horned owl perched on a bare weathered branch against a smooth deep-blue twilight sky.
    \item A weathered enamel sign reading "FRESH BREAD DAILY" hanging above a plain oak bakery counter, uncluttered warm-gray plaster wall background, tiny paint chips on the enamel.
    \item A cinematic close-up portrait of an elderly fisherman with a weathered face and kind eyes, soft diffused window light, shallow depth of field.
    \item A watercolor painting of cherry blossom trees beside a quiet river.
\end{itemize}

\section{Additional Qualitative Results}
\label{app:qualitative-results}

Figure~\ref{fig:app-qualitative-aesthetics} to Figure~\ref{fig:app-qualitative-ocr} presents additional examples over prompt-following, text-rendering and aesthetic generation.

\begin{figure*}[!t]
    \centering
    \includegraphics[width=1\textwidth]{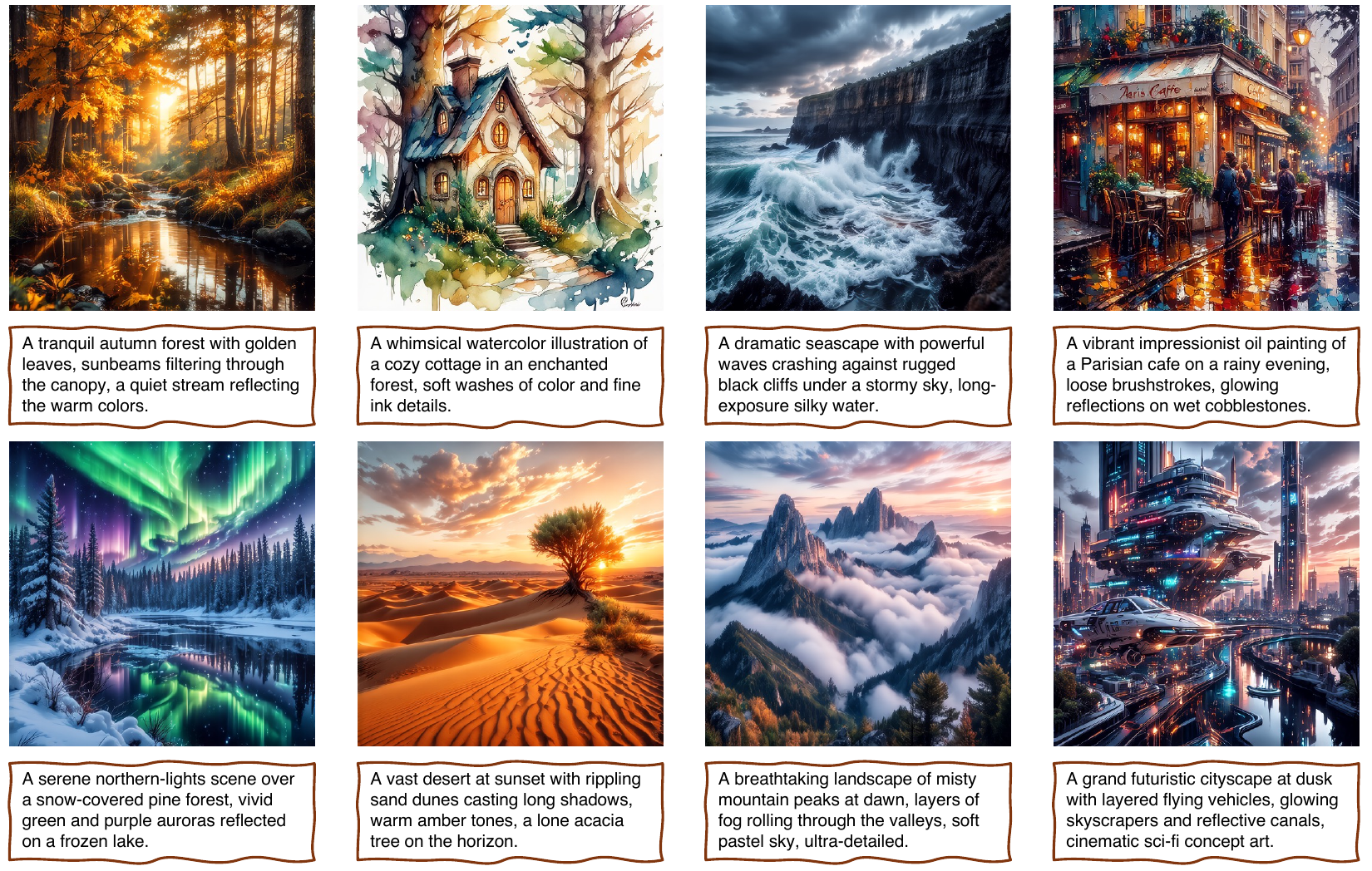}
    \caption{Additional qualitative results on aesthetic generation. The examples cover natural scenes, artistic styles, and complex compositions.}
    \label{fig:app-qualitative-aesthetics}
\end{figure*}

\begin{figure*}[!t]
    \centering
    \includegraphics[width=1\textwidth]{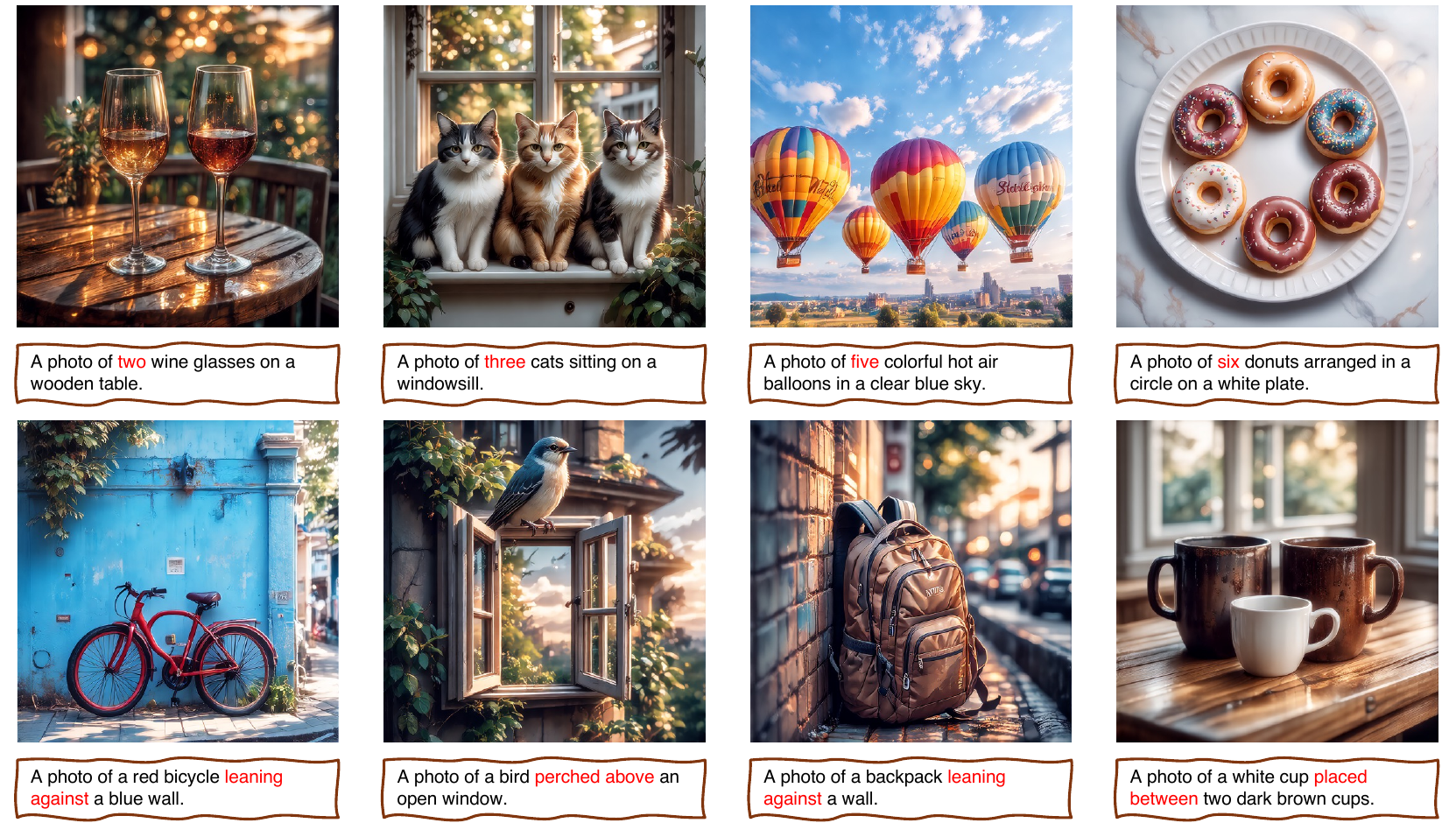}
    \caption{Additional qualitative results on compositional prompt following. The examples cover counting, color recognition, spatial relations, and attribute binding.}
    \label{fig:app-qualitative-geneval}
\end{figure*}

\begin{figure*}[!t]
    \centering
    \includegraphics[width=1\textwidth]{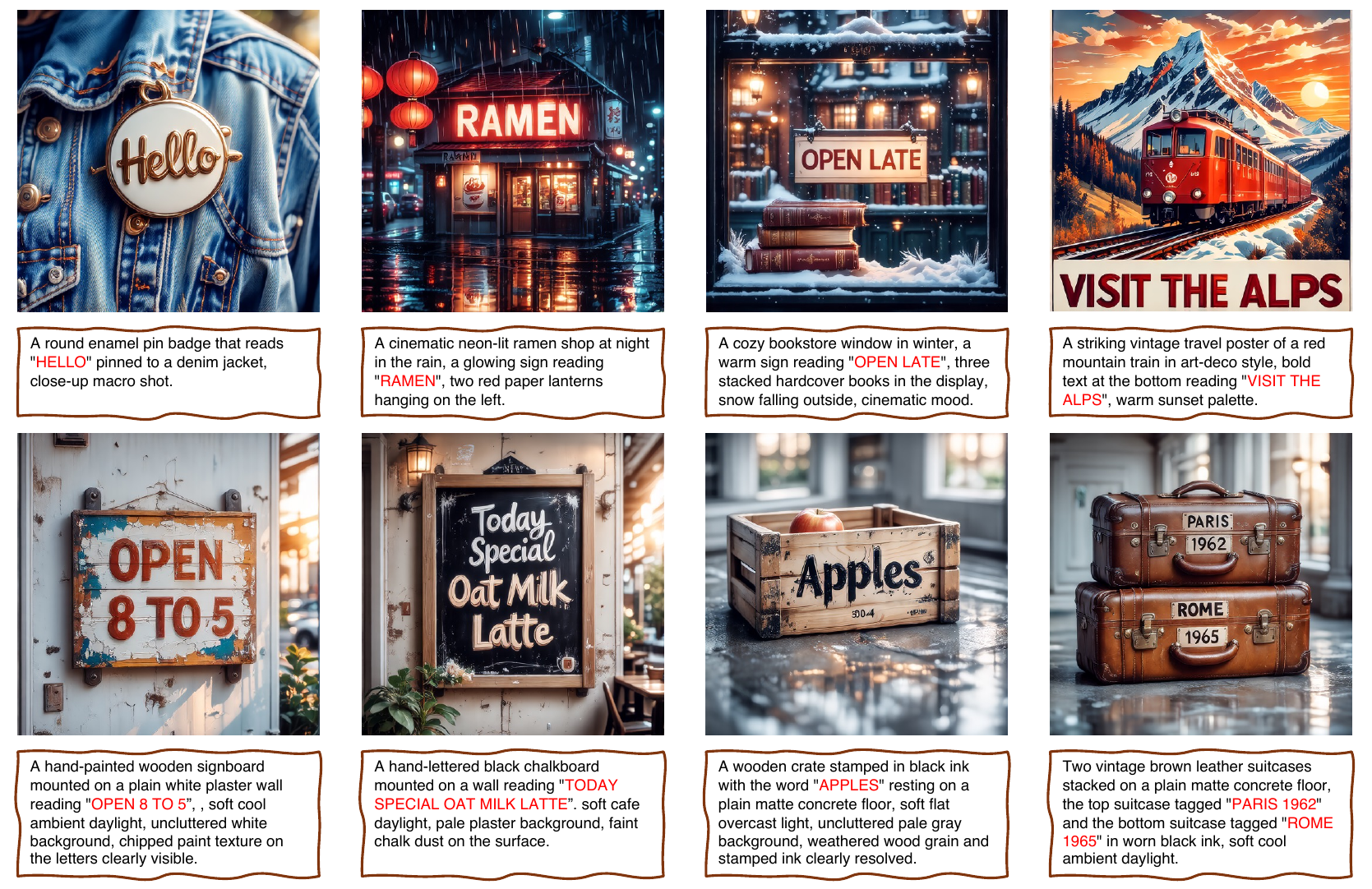}
    \caption{Additional qualitative results on text rendering.
    The examples include short text, long text an complex backgrounds.}
    \label{fig:app-qualitative-ocr}
\end{figure*}

\section{Future Directions}
Future work may explore more adaptive ways to construct degraded references and schedule the extrapolation strength during training. It would also be valuable to extend \textbf{\texttt{DreOPD}} beyond text-to-image generation, such as video generation and controllable image editing, to further examine the generality of degraded-reference extrapolative distillation.

\end{document}

%% file: math_commands.tex
\usepackage{amsmath,amsfonts,bm}

\def\eqref#1{equation~\ref{#1}}
\def\1{\bm{1}}

\DeclareMathAlphabet{\mathsfit}{\encodingdefault}{\sfdefault}{m}{sl}
\SetMathAlphabet{\mathsfit}{bold}{\encodingdefault}{\sfdefault}{bx}{n}